\title{DFacTo: Distributed Factorization of Tensors}
\author{
Joon Hee Choi \\
Electrical \& Computer Engineering \\
Purdue University \\
West Lafayette IN 47907 \\
\texttt{choi240@purdue.edu} \\
\And
S.~V.~N.~Vishwanathan \\
Statistics and Computer Science \\
Purdue University \\
West Lafayette IN 47907 \\
\texttt{vishy@stat.purdue.edu} \\
}
\begin{document}

\maketitle

\begin{abstract}
  We present a technique for significantly speeding up Alternating Least
  Squares (ALS) and Gradient Descent (GD), two widely used algorithms
  for tensor factorization. By exploiting properties of the Khatri-Rao
  product, we show how to efficiently address a computationally
  challenging sub-step of both algorithms. Our algorithm, DFacTo, only
  requires two sparse matrix-vector products and is easy to
  parallelize. DFacTo is not only scalable but also on average 4 to 10
  times faster than competing algorithms on a variety of datasets. For
  instance, DFacTo only takes 480 seconds on 4 machines to perform one
  iteration of the ALS algorithm and 1,143 seconds to perform one
  iteration of the GD algorithm on a 6.5 million $\times$ 2.5 million
  $\times$ 1.5 million dimensional tensor with \textbf{1.2 billion}
  non-zero entries.
\end{abstract}

\section{Introduction}
\label{sec:Introduction}

Tensor data appears naturally in a number of applications
\cite{SmiBroGel04,KolBad09}. For instance, consider a social network
evolving over time. One can form a users $\times$ users $\times$ time
tensor which contains snapshots of interactions between members of the
social network ~\cite{LesKleFal05}. As another example consider an online
store such as Amazon.com where users routinely review various
products. One can form a users $\times$ items $\times$ words tensor from
the review text \cite{McALes13}. Similarly a tensor can be formed by
considering the various contexts in which a user has interacted with an
item \cite{KarAmaBalOli10}. Finally, consider data collected by the
Never Ending Language Learner from the Read the Web project which
contains triples of noun phrases and the context in which they occur,
such as, (``George Harrison'', ``plays'', ``guitars'')
\cite{CarBetKisSetetal10}. 

While matrix factorization and matrix completion have become standard
tools that are routinely used by practitioners, unfortunately, the same
cannot be said about tensor factorization. The reasons are not very hard
to see: There are two popular algorithms for tensor factorization namely
Alternating Least Squares (ALS) (Appendix~\ref{sec:ALS}), and Gradient
Descent (GD) (Appendix~\ref{sec:GD}). The key step in both algorithms is
to multiply a matricized tensor and a Khatri-Rao product of two matrices
(line~\ref{algo:x1cb-als} of Algorithm~\ref{algo:parfac} and line~\ref{algo:x1cb-gd} of Algorithm~\ref{algo:gd}).

However, this process leads to a computationally-challenging,
intermediate data explosion problem. This problem is exacerbated when the
dimensions of tensor we need to factorize are very large (of the order
of hundreds of thousands or millions), or when sparse tensors contain
millions to billions of non-zero entries. For instance, a tensor we
formed using review text from Amazon.com has dimensions of 6.5 million
$\times$ 2.5 million $\times$ 1.5 million and contains approximately
\textbf{1.2 billion} non-zero entries.

Some studies have identified this intermediate data explosion problem
and have suggested ways of addressing it. First, the Tensor Toolbox
\cite{BadKol07} uses the method of reducing indices of the tensor for
sparse datasets and entrywise multiplication of vectors and matrices for
dense datasets. However, it is not clear how to store data or how to
distribute the tensor factorization computation to multiple machines
(see Appendix \ref{sec:IllustrDiffAlgor}). That is, there is a lack of
\emph{distributable} algorithms in existing studies.  Another possible
strategy to solve the data explosion problem is to use
GigaTensor~\cite{KanPapHarFal12}. Unfortunately, while GigaTensor does
address the problem of parallel computation, it is relatively slow. To
summarize, existing algorithms for tensor factorization such as the
excellent Tensor Toolbox of \cite{BadKol07}, or the Map-Reduce based
GigaTensor algorithm of \cite{KanPapHarFal12} often do not scale to
large problems.

In this paper, we introduce an efficient, scalable and distributed
algorithm, DFacTo, that addresses the data explosion problem.  Since
most large-scale real datasets are sparse, we will focus exclusively on
sparse tensors. This is well justified because previous studies have
shown that designing specialized algorithms for sparse tensors can yield
significant speedups \cite{BadKol07}. We show that DFacTo can be
applied to both ALS and GD, and naturally lends itself to a distributed
implementation. Therefore, it can be applied to massive real
datasets which cannot be stored and manipulated on a single machine. For
ALS, DFacTo is on average around 5 times faster than GigaTensor and
around 10 times faster than the Tensor Toolbox on a variety of
datasets. In the case of GD, DFacTo is on average around 4 times faster
than CP-OPT~\cite{AcaDunKol11} from the Tensor Toolbox. On the
Amazon.com review dataset, DFacTo only takes 480 seconds on 4 machines
to perform one iteration of ALS and 1,143 seconds to perform one
iteration of GD.

As with any algorithm, there is a trade-off: DFacTo uses 3 times more memory than the Tensor Toolbox, since it needs to store 3
flattened matrices as opposed to a single tensor.  However, in return,
our algorithm only requires two sparse matrix-vector multiplications,
making DFacTo easy to implement using any standard sparse linear algebra
library. Therefore, there are two merits of using our algorithm: 1)
computations are distributed in a natural way; and 2) only standard
operations are required.

\section{Notation and Preliminaries}
\label{sec:NotatPrel}

Our notation is standard, and closely follows \cite{KolBad09}. Also see
\cite{SmiBroGel04}. Lower case letters such as $x$ denote scalars, bold
lower case letters such as $\xb$ denote vectors, bold upper case letters
such as $\Xb$ represent matrices, and calligraphic letters such as
$\Xscrb$ denote three-dimensional tensors.

The $i$-th element of a vector $\xb$ is written as $x_{i}$. In a similar
vein, the $(i,j)$-th entry of a matrix $\Xb$ is denoted as $x_{i,j}$ and
the $(i, j, k)$-th entry of a tensor $\Xscrb$ is written as
$x_{i,j,k}$. Furthermore, $\xb_{i,:}$ (resp.\ $\xb_{:,i}$) denotes the
$i$-th row (resp.\ column) of $\Xb$. We will use $\Xb_{\Omega, :}$
(resp.\ $\Xb_{:,\Omega}$) to denote the sub-matrix of $\Xb$ which
contains the rows (resp.\ columns) indexed by the set $\Omega$. For
instance, if $\Omega = \cbr{2,4}$, then $\Xb_{\Omega,:}$ is a matrix
which contains the second and fourth rows of $\Xb$. Extending the above
notation to tensors, we will write $\Xb_{i,:,:}$, $\Xb_{:,j,:}$ and
$\Xb_{:,:,k}$ to respectively denote the horizontal, lateral and frontal
\emph{slices} of a third-order tensor $\Xscrb$. The column, row, and tube
\emph{fibers} of $\Xscrb$ are given by $\xb_{:,j,k}$, $\xb_{i,:,k}$, and
$\xb_{i,j,:}$ respectively. 

Sometimes a matrix or tensor may not be fully observed. We will use
$\Omega^{\Xb}$ or $\Omega^{\Xscrb}$ respectively to denote the set of
indices corresponding to the observed (or equivalently non-zero) entries
in a matrix $\Xb$ or a tensor $\Xscrb$.  Extending this notation,
$\Omega^{\Xb}_{i,:}$ (resp.\ $\Omega^{\Xb}_{:,j}$) denotes the set of
column (resp.\ row) indices corresponding to the observed entries in the
$i$-th row (resp.\ $j$-th column) of $\Xb$. We define
$\Omega^{\Xscrb}_{i,:,:}$, $\Omega^{\Xscrb}_{:,j,:}$, and
$\Omega^{\Xscrb}_{:,:,k}$ analogously as the set of indices
corresponding to the observed entries of the $i$-th horizontal, $j$-th
lateral, or $k$-th frontal slices of $\Xscrb$.  Also, $nnzr(\Xb)$
(resp.\ $nnzc(\Xb)$) denotes the number of rows (resp.\ columns) of
$\Xb$ which contain at least one non-zero element.

$\Xb^{\top}$ denotes the transpose, $\Xb^{\dagger}$ denotes the
Moore-Penrose pseudo-inverse, and $\nbr{\Xb}$ (resp.\ $\nbr{\Xscrb}$)
denotes the Frobenius norm of a matrix $\Xb$ (resp.\ tensor $\Xscrb$)
\cite{HorJoh90}. Given a matrix $\Ab \in \RR^{n \times m}$, the linear
operator $vec(\Ab)$ yields a vector $\xb \in \RR^{nm}$, which is
obtained by stacking the columns of $\Ab$. On the other hand, given a
vector $\xb \in \RR^{nm}$, the operator $unvec_{(n,m)}(\xb)$ yields a
matrix $\Ab \in \RR^{n \times m}$.

$\Ab \otimes \Bb$ denotes the Kronecker product, $\Ab \odot \Bb$ the
Khatri-Rao product, and $\Ab \ast \Bb$ the Hadamard product of matrices
$\Ab$ and $\Bb$. The outer product of vectors $\ab$ and $\bb$ is written
as $\ab \circ \bb$ (see \eg, \cite{Bernstein05}). Definitions of these
standard matrix products can be found in
Appendix~\ref{app:MatrixProducts}.

\subsection{Flattening Tensors }
\label{sec:FlatteningTensors}

Just like the $vec(\cdot)$ operator flattens a matrix, a tensor $\Xscrb$
may also be unfolded or flattened into a matrix in three ways namely by
stacking the horizontal, lateral, and frontal slices. We use $\Xb^n$ to
denote the $n$-mode flattening of a third-order tensor $\Xscrb \in
\RR^{I \times J \times K}$; $\Xb^{1}$ is of size $I \times JK$,
$\Xb^{2}$ is of size $J \times KI$, and $\Xb^{3}$ is of size $K \times
IJ$. The following relationships hold between the entries of $\Xscrb$
and its unfolded versions (see Appendix~\ref{app:FlatteningTensors} for
an illustrative example):
\begin{align}
  \label{eq:x1}
  x_{i,j,k}  = x^{1}_{i,j+(k-1)J}   = x^{2}_{j,k+(i-1)K} = x^{3}_{k,i+(j-1)I}. 
\end{align}

We can view $\Xb^{1}$ as consisting of $K$ stacked frontal slices of
$\Xscrb$, each of size $I \times J$. Similarly, $\Xb^{2}$ consists of
$I$ slices of size $J \times K$ and $\Xb^{3}$ is made up of $J$ slices
of size $K \times I$. If we use $\Xb^{n,m}$ to denote the $m$-th slice
in the $n$-mode flattening of $\Xscrb$, then observe that the following
holds:
\begin{align}  
  \label{eq:x1-blockIndices}
  x^{1}_{i,j+(k-1)J}  = x^{1,k}_{i,j}, \quad  
  x^{2}_{j,k+(i-1)K}  = x^{2,i}_{j,k}, \quad
  x^{3}_{k,i+(j-1)I}  = x^{3,j}_{k,i}. 
\end{align}
One can state a relationship between the rows and columns of various
flattenings of a tensor, which will be used to derive our distributed
tensor factorization algorithm in Section~\ref{sec:NewModel}. The proof of
the below lemma is in Appendix~\ref{app:proof-lem1}.
\begin{lemma}
  \label{lem:flat-vec}
  Let $(n, n') \in \cbr{ (2,1), (3,2), (1,3)}$, and let $\Xb^{n}$ and
  $\Xb^{n'}$ be the $n$ and $n'$-mode flattening respectively of a
  tensor $\Xscrb$. Moreover, let $\Xb^{n,m}$ be the $m$-th slice in
  $\Xb^{n}$, and $\xb_{m,:}^{n'}$ be the $m$-th row of $\Xb^{n'}$. Then,
  $vec(\Xb^{n,m}) = \xb_{m,:}^{n'}$.
\end{lemma}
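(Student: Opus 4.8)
The plan is to prove the identity entrywise, reducing everything to the index relations \eqref{eq:x1} and \eqref{eq:x1-blockIndices} that are already in hand. Fix a pair $(n,n')$ from the list. Because the three flattenings are defined by cyclically permuting the roles of $(i,j,k)$, it suffices to carry out the argument for one representative pair, say $(n,n')=(2,1)$; the cases $(3,2)$ and $(1,3)$ then follow by the identical computation after a cyclic relabeling of the indices.

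For $(n,n')=(2,1)$, the first step is to express the entries of the slice $\Xb^{2,m}$ in terms of the tensor. By the slice identity $x^{2,i}_{j,k}=x^2_{j,k+(i-1)K}$ in \eqref{eq:x1-blockIndices} together with $x^2_{j,k+(i-1)K}=x_{i,j,k}$ from \eqref{eq:x1}, the $(j,k)$ entry of the $J\times K$ matrix $\Xb^{2,m}$ is simply $x_{m,j,k}$. Next I compute $vec(\Xb^{2,m})$: since $vec$ stacks columns and $\Xb^{2,m}$ has $J$ rows, its $(j,k)$ entry lands in position $j+(k-1)J$, so that position of $vec(\Xb^{2,m})$ equals $x_{m,j,k}$. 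Finally I read off the $m$-th row of $\Xb^{1}$: by \eqref{eq:x1}, the entry of $\xb^{1}_{m,:}$ in column $j+(k-1)J$ is $x^1_{m,\,j+(k-1)J}=x_{m,j,k}$. Comparing the two expressions, the entry in position $j+(k-1)J$ agrees for every $(j,k)$, and as $(j,k)$ ranges over $\cbr{1,\dots,J}\times\cbr{1,\dots,K}$ the index $j+(k-1)J$ ranges bijectively over $\cbr{1,\dots,JK}$, so the two vectors coincide (identifying the row $\xb^{1}_{m,:}$ with its vector of entries).

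The only point requiring care, and the single place where the argument could go wrong, is the index bookkeeping: one must match the column-major linear index $j+(k-1)J$ produced by the $vec$ operator against the column index used in the flattening $\Xb^{n'}$. This is not a deep obstacle but it is the crux, since both conventions happen to use the same offset $j+(k-1)J$ inherited from \eqref{eq:x1}. This coincidence is exactly what forces the specific pairing $(n,n')\in\cbr{(2,1),(3,2),(1,3)}$ in the hypothesis rather than an arbitrary pair, and verifying that the two offsets line up in each of the three cases is all that the proof ultimately amounts to.
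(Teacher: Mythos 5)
Your proof is correct and takes essentially the same approach as the paper's: both reduce the claim to entrywise index bookkeeping using \eqref{eq:x1}, \eqref{eq:x1-blockIndices}, and the $vec$ position formula \eqref{eq:vecidx}, work out one representative pair in full, and dispatch the remaining two pairs as analogous under cyclic relabeling. The only (cosmetic) difference is that the paper's representative case is $(n,n')=(1,3)$ whereas yours is $(2,1)$, and you spell out the column-major position matching and the bijectivity of $(j,k)\mapsto j+(k-1)J$ a bit more explicitly.
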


\section{DFacTo}
\label{sec:NewModel}

Recall that the main challenge of implementing ALS or GD for solving
tensor factorization lies in multiplying a matricized tensor and a
Khatri-Rao product of two matrices: $\Xb^{1} \rbr{\Cb \odot
  \Bb}$\footnote{We mainly concentrate on the update to $\Ab$ since the
  updates to $\Bb$ and $\Cb$ are analogous.}  . If $\Bb$ is of size $J
\times R$ and $\Cb$ is of size $K \times R$, explicitly forming
$\rbr{\Cb \odot \Bb}$ requires $O(JKR)$ memory and is infeasible when
$J$ and $K$ are large. This is called the intermediate data explosion
problem in the literature~\cite{KanPapHarFal12}. The lemma below will be
used to derive our efficient algorithm, which avoids this
problem. Although the proof can be inferred using results in
\cite{KolBad09}, we give an elementary proof for completeness.
\begin{lemma}
  \label{lem:newparfac}
  The $r$-th column of $\Xb^{1} \rbr{\Cb \odot \Bb}$ can be computed as
  \begin{align}
    \label{eq:vecabc-appl}
    \sbr{\Xb^{1} \rbr{\Cb \odot \Bb}}_{:,r} = \sbr{unvec_{(K,I)} \rbr{
      \rbr{\Xb^{2}}^{\top} \bb_{:,r}}}^{\top} \cbb_{:,r}
  \end{align}
\end{lemma}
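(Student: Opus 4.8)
The plan is to prove the identity coordinate-by-coordinate: I will show that the $i$-th entry of the left-hand vector and the $i$-th entry of the right-hand vector both equal the double sum $\sum_{j,k} x_{i,j,k}\, b_{j,r}\, c_{k,r}$, from which the two vectors agree. First I would expand the left-hand side. By the definition of the Khatri-Rao product, the $r$-th column of $\Cb \odot \Bb$ is the Kronecker product $\cbb_{:,r} \otimes \bb_{:,r} \in \RR^{KJ}$, whose entry in position $j+(k-1)J$ is $c_{k,r}\, b_{j,r}$. Forming the product with $\Xb^{1}$ and reading off the $i$-th coordinate while substituting the flattening relation $x^{1}_{i,j+(k-1)J} = x_{i,j,k}$ from~\eqref{eq:x1} yields
\begin{align*}
  \sbr{\Xb^{1} \rbr{\Cb \odot \Bb}}_{i,r} = \sum_{k=1}^{K} \sum_{j=1}^{J} x_{i,j,k}\, b_{j,r}\, c_{k,r}.
\end{align*}

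Next I would unwind the right-hand side from the inside out. The vector $\rbr{\Xb^{2}}^{\top} \bb_{:,r}$ lies in $\RR^{KI}$; using $x^{2}_{j,k+(i-1)K} = x_{i,j,k}$ from~\eqref{eq:x1}, its component indexed by $m = k+(i-1)K$ equals $\sum_{j} x_{i,j,k}\, b_{j,r}$. Applying $unvec_{(K,I)}$ reshapes this vector into a $K \times I$ matrix $\Mb$ by column-stacking, so that the linear index $m=(i-1)K+k$ maps to the entry $M_{k,i} = \sum_{j} x_{i,j,k}\, b_{j,r}$. Finally, transposing and multiplying by $\cbb_{:,r}$ contracts over the $k$ index while leaving $i$ free:
\begin{align*}
  \sbr{\Mb^{\top} \cbb_{:,r}}_{i} = \sum_{k=1}^{K} M_{k,i}\, c_{k,r} = \sum_{k=1}^{K} \sum_{j=1}^{J} x_{i,j,k}\, b_{j,r}\, c_{k,r}.
\end{align*}
Comparing the two displays establishes the claim.

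The only delicate point, and the step I would check most carefully, is aligning the three index conventions so that they are mutually consistent. Concretely, the column ordering of $\Xb^{1}$ (namely $j+(k-1)J$) must match the ordering of the Kronecker product $\cbb_{:,r} \otimes \bb_{:,r}$, and the column-stacking convention built into $unvec_{(K,I)}$ must send the linear index $m=k+(i-1)K$ of the columns of $\Xb^{2}$ to the $(k,i)$ entry of $\Mb$. The transpose appearing in the statement is exactly what reconciles the $K \times I$ shape produced by $unvec_{(K,I)}$ with the contraction over $k$; an orientation error is the easiest mistake to make here. Everything else is routine substitution using the flattening relations in~\eqref{eq:x1}.
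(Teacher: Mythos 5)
Your proof is correct --- all three index conventions you rely on (the column ordering $j+(k-1)J$ of $\Xb^{1}$, the column ordering $k+(i-1)K$ of $\Xb^{2}$, and the column-stacking convention of $unvec_{(K,I)}$) line up exactly as you claim --- but it takes a genuinely more elementary route than the paper. The paper also reduces to the $i$-th entry, but instead of expanding into scalar sums it identifies the $i$-th entry of the right-hand side as the bilinear form $\bb_{:,r}^{\top}\,\Xb^{2,i}\,\cbb_{:,r}$, where $\Xb^{2,i}$ is the $i$-th slice of the $2$-mode flattening, and then finishes in two structural steps: the Kronecker identity $vec(\Ab\Bb\Cb) = \rbr{\Cb^{\top}\otimes\Ab} vec(\Bb)$ from \eqref{eq:vecabc} converts that bilinear form to $\rbr{\cbb_{:,r}^{\top}\otimes\bb_{:,r}^{\top}} vec\rbr{\Xb^{2,i}}$, and Lemma~\ref{lem:flat-vec} identifies $vec\rbr{\Xb^{2,i}}$ with the $i$-th row of $\Xb^{1}$, which is exactly what gets contracted against the $r$-th Khatri-Rao column on the left-hand side. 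Your coordinate-by-coordinate expansion into $\sum_{j,k} x_{i,j,k}\,b_{j,r}\,c_{k,r}$ buys self-containedness: it uses only the flattening relations \eqref{eq:x1} and the product definitions, and it makes explicit the index bookkeeping that the paper's abstract identities silently absorb --- precisely the step you flag as delicate, and where an error would otherwise hide. What the paper's route buys is brevity and reusability: by factoring the argument through Lemma~\ref{lem:flat-vec}, it isolates the structural fact (rows of one flattening are vectorized slices of another) that also drives the analogous identities for the $\Bb$ and $\Cb$ updates, and it avoids re-deriving Kronecker index arithmetic already packaged in \eqref{eq:vecabc}.
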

\begin{proof}
  We need to show that 
  \begin{align*}
    \sbr{\Xb^{1} \rbr{\Cb \odot \Bb}}_{:,r}
    & = \sbr{unvec_{(K,I)} \rbr{\rbr{\Xb^2}^{\top} \bb_{:,r}}}^{\top}
    \cbb_{:,r} \\
    & = \mymatrix{c}{\bb_{:,r}^{\top} \; \Xb^{2,1} \; \cbb_{:,r} \\
      \vdots \\
      \bb_{:,r}^{\top} \; \Xb^{2,I} \; \cbb_{:,r}}.
  \end{align*}
  Or equivalently it suffices to show that $\sbr{\Xb^{1} \rbr{\Cb \odot
      \Bb}}_{i,r} = \bb_{:,r}^{\top} \; \Xb^{2,i} \; \cbb_{:,r}$.  Using
  \eqref{eq:vecabc}
  \begin{align}
    vec \rbr{\bb_{:,r}^{\top} \; \Xb^{2,i} \; \cbb_{:,r}}
    = \rbr{\cbb_{:,r}^{\top} \otimes \bb_{:,r}^{\top}} vec
    \rbr{\Xb^{2,i}}.
  \end{align}
  Observe that $\bb_{:,r}^{\top} \; \Xb^{2,i} \; \cbb_{:,r}$ is a
  scalar.  Moreover, using Lemma~\ref{lem:flat-vec} we can write $vec
  \rbr{\Xb^{2,i}} = \xb_{i,:}^{1}$. This allows us to rewrite the above
  equation as
  \begin{align*}
    \bb_{:,r}^{\top} \; \Xb^{2,i} \; \cbb_{:,r}  =
    \rbr{\xb_{i,:}^{1}}^{\top} \rbr{\cbb_{:,r} \otimes \bb_{:,r}} =
    \sbr{\Xb^{1} \rbr{\Cb \odot \Bb}}_{i,r},
  \end{align*}
  which completes the proof.
\end{proof}
Unfortunately, a naive computation of $\sbr{\Xb^{1} \rbr{\Cb \odot
    \Bb}}_{:,r}$ by using \eqref{eq:vecabc-appl} does not solve the
intermediate data explosion problem. This is because
$\rbr{\Xb^{2}}^{\top} \bb_{:,r}$ produces a $K I$ dimensional vector,
which is then reshaped by the $unvec_{\rbr{K,I}}(\cdot)$ operator into a
$K \times I$ matrix. However, as the next lemma asserts, only a small
number of entries of $\rbr{\Xb^{2}}^{\top} \bb_{:,r}$ are non-zero.

For convenience, let a vector produced by $(\Xb^2)^{\top} \bb_{:,r}$ be 
$\vb_{:,r}$ and a matrix produced by $\sbr{unvec_{(K,I)}(\vb_{:,r})}^{\top}$ 
be $\Mb^r$.
\begin{lemma}
  \label{lem:nnzc}
  The number of non-zeros in $\vb_{:,r}$ is at most $nnzr((\Xb^2)^{\top})$ 
  and $nnzc(\Xb^2)$.
\end{lemma}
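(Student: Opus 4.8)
The plan is to reduce the claim to an elementary sparsity property of matrix-vector products, exactly as the surrounding discussion anticipates. First I would write the $\ell$-th entry of $\vb_{:,r} = (\Xb^2)^{\top} \bb_{:,r}$ explicitly. Since $\big((\Xb^2)^{\top}\big)_{\ell,j} = x^{2}_{j,\ell}$, we have
\[
(\vb_{:,r})_{\ell} = \sum_{j} x^{2}_{j,\ell}\, b_{j,r},
\]
which is precisely the inner product of the $\ell$-th row of $(\Xb^2)^{\top}$ with $\bb_{:,r}$.

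The key observation is then immediate: this inner product can be non-zero only if the $\ell$-th row of $(\Xb^2)^{\top}$ contains at least one non-zero entry, because if that entire row vanishes the sum is $0$ irrespective of $\bb_{:,r}$. Hence the support of $\vb_{:,r}$ is contained in the set of indices $\ell$ for which the $\ell$-th row of $(\Xb^2)^{\top}$ is non-empty, and therefore the number of non-zeros in $\vb_{:,r}$ is at most $nnzr((\Xb^2)^{\top})$.

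For the second bound I would simply invoke the fact that transposition interchanges rows and columns: the non-zero rows of $(\Xb^2)^{\top}$ are exactly the non-zero columns of $\Xb^2$, so $nnzr((\Xb^2)^{\top}) = nnzc(\Xb^2)$ and the identical upper bound follows. The lemma thus records a single sparsity guarantee expressed in the two forms that are natural depending on how $\Xb^2$ is stored.

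There is no genuine obstacle in this argument; the only point worth a sentence of care is cancellation. Several terms $x^{2}_{j,\ell} b_{j,r}$ in a non-empty row could sum to zero, making $(\vb_{:,r})_{\ell}$ vanish even though that row is non-empty. Since the statement asserts only an upper bound (``at most''), such cancellations can only decrease the count and so do not affect the conclusion; I would note this explicitly to justify the inequality rather than an equality.
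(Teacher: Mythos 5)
Your proof is correct and follows essentially the same route as the paper's: an all-zero row of $(\Xb^2)^{\top}$ forces the corresponding entry of $\vb_{:,r}$ to vanish, and $nnzr((\Xb^2)^{\top}) = nnzc(\Xb^2)$ by transposition. In fact you are slightly more careful than the paper, which asserts the count is \emph{equal} to the number of non-zero rows; your remark about cancellation (and, one could add, zero entries of $\bb_{:,r}$) correctly explains why only the ``at most'' bound of the lemma statement is justified.
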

\begin{proof}
  Multiplying an all-zero row in $(\Xb^2)^{\top}$ and $\bb_{:,r}$
  produces zero. Therefore, the number of non-zeros in $\vb_{:,r}$ is
  equal to the number of rows in $(\Xb^2)^{\top}$ that contain at least
  one non-zero element. Also, by definition, $nnzr((\Xb^2)^{\top})$ is
  equal to $nnzc(\Xb^2)$.
\end{proof}
As a consequence of the above lemma, we only need to explicitly compute
the non-zero entries of $\vb_{:,r}$. However, the problem of reshaping
$\vb_{:,r}$ via the $\sbr{unvec_{\rbr{K, I}}(\cdot)}^{\top}$ operator
still remains. The next lemma shows how to overcome this difficulty.
\begin{lemma}
  \label{lem:sparsity}
  The location of the non-zero entries of $\Mb^r$ depends on
  $(\Xb^2)^{\top}$ and is independent of $\bb_{:,r}$.
\end{lemma}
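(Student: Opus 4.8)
The plan is to make the reshaping operation completely explicit as a fixed index map, and then read the support of $\Mb^r$ directly off the column-sparsity of $\Xb^2$ using Lemma~\ref{lem:nnzc}. First I would track a single entry through the two operations that build $\Mb^r$ from $\vb_{:,r}$. Since $vec(\cdot)$ stacks columns, its inverse $unvec_{(K,I)}(\cdot)$ places the $p$-th component of a vector at position $(k,i)$ of a $K \times I$ matrix precisely when $p = k+(i-1)K$, and the subsequent transpose merely swaps the two indices. Hence the $(i,k)$ entry of $\Mb^r = \sbr{unvec_{(K,I)}(\vb_{:,r})}^{\top}$ equals the $(k,i)$ entry of $unvec_{(K,I)}(\vb_{:,r})$, so that
\[
  M^r_{i,k} = \rbr{\vb_{:,r}}_{k+(i-1)K}.
\]
The point to stress is that the correspondence $(i,k) \leftrightarrow p = k+(i-1)K$ is a fixed bijection determined only by the dimensions $K$ and $I$, and never by $\bb_{:,r}$.

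Next I would expand the relevant component of $\vb_{:,r} = (\Xb^2)^{\top}\bb_{:,r}$, namely $\rbr{\vb_{:,r}}_{p} = \sum_{j} x^{2}_{j,p}\, b_{j,r}$, which is the inner product of the $p$-th row of $(\Xb^2)^{\top}$ (equivalently the $p$-th column of $\Xb^2$) with $\bb_{:,r}$. By the reasoning of Lemma~\ref{lem:nnzc}, this inner product is forced to vanish whenever that row of $(\Xb^2)^{\top}$ is identically zero, independently of the entries of $\bb_{:,r}$. Combining this with the fixed index map from the first step, the set of positions $(i,k)$ at which $M^r_{i,k}$ can be non-zero is exactly the image, under $p = k+(i-1)K$, of the indices $p$ for which the $p$-th column of $\Xb^2$ contains a non-zero entry. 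Because both the map and the zero/non-zero pattern of the columns of $\Xb^2$ are properties of $\Xb^2$ (equivalently $(\Xb^2)^{\top}$) alone, the location of the non-zero entries of $\Mb^r$ is the same for every column $\bb_{:,r}$, which is the assertion of the lemma.

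The one delicate point, and the place I would be careful when writing the argument, is the distinction between the structural sparsity pattern and exact numerical zeros: for a special $\bb_{:,r}$ an inner product against a non-zero column of $\Xb^2$ could cancel to zero, so the claim cannot be that every structurally admissible position is actually non-zero for all $\bb_{:,r}$. What is true, and what the algorithm exploits, is the containment statement — the support of $\Mb^r$ always lies inside the fixed pattern dictated by the non-zero columns of $\Xb^2$, and a zero column of $\Xb^2$ never yields a non-zero entry for any $\bb_{:,r}$ — so this pattern can be precomputed once from $\Xb^2$ and reused across all $r$ and all ALS/GD iterations. I do not anticipate a genuine obstacle here: the entire content is bookkeeping of the $unvec$/transpose index map together with the vanishing observation already established in Lemma~\ref{lem:nnzc}.
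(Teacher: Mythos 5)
Your proof is correct and follows essentially the same route as the paper's: both track an entry through the fixed index map $(i,k) \leftrightarrow k+(i-1)K$ induced by $\sbr{unvec_{(K,I)}(\cdot)}^{\top}$ and observe that an all-zero row of $(\Xb^2)^{\top}$ forces the corresponding entry of $\Mb^r$ to vanish regardless of $\bb_{:,r}$. If anything, your write-up is slightly more careful than the paper's — the paper's proof misstates the product as an element of $\bb_{:,r}$ rather than of $\vb_{:,r}$, and your remark that the result is a containment of the support in a fixed pattern (since cancellation could zero out structurally admissible entries) makes precise what the paper leaves implicit.
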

\begin{proof}
  The product of the $(k+(i-1)K)$-$th$ row of $(\Xb^2)^{\top}$ and
  $\bb_{:,r}$ is the $(k+(i-1)K)$-$th$ element of $\bb_{:,r}$. And, this
  element is the $(i,k)$-$th$ entry of $\Mb^r$ by definition of
  $\sbr{unvec_{\rbr{K,I}}(\cdot)}^{\top}$.  Therefore, if all the
  entries in the $(k+(i-1)K)$-$th$ row of $(\Xb^2)^{\top}$ are zero,
  then the $(i,k)$-$th$ entry of $\Mb^r$ is zero regardless of
  $\bb_{:,r}$. Consequently, the location of the non-zero entries of
  $\Mb^r$ is independent of $\bb_{:,r}$, and is only determined by
  $(\Xb^2)^{\top}$.
\end{proof}
Given $\Xscrb$ one can compute $(\Xb^2)^{\top}$ to know the locations of
the non-zero entries of $\Mb^r$. In other words, we can infer the
non-zero pattern and therefore preallocate memory for $\Mb^r$. We will
show below how this allows us to perform the $\sbr{unvec_{\rbr{K,
      I}}(\cdot)}^{\top}$ operation for free.

Recall the Compressed Sparse Row (CSR) Format, which stores a sparse
matrix as three arrays namely \emph{values}, \emph{columns}, and
\emph{rows}.  Here, \emph{values} represents the non-zero values of the
matrix; while \emph{columns} stores the column indices of the non-zero
values. Also, \emph{rows} stores the indices of the \emph{columns} array
where each row starts. For example, if a sparse matrix $\Mb^r$ is
\begin{align*}
  \Mb^r =  \mymatrix{ccc}{1 & 0 & 2 \\ 0 & 3 & 4},
\end{align*}
then the CSR of $\Mb^r$ is
\begin{align*}
  value(\Mb^r) &= \mymatrix{cccc}{1 & 2 & 3 & 4} \\
  col(\Mb^r) &= \mymatrix{cccc}{0 & 2 & 1 & 2} \\
  row(\Mb^r) &= \mymatrix{ccc}{0 & 2 & 4}.
\end{align*}
Different matrices with the same sparsity pattern can be represented by
simply changing the entries of the \emph{value} array. For our
particular case, what this means is that we can pre-compute $col(\Mb^r)$
and $row(\Mb^r)$ and pre-allocate $value(\Mb^r)$. By writing the
non-zero entries of $\vb_{:,r}$ into $value(\Mb^{r})$ we can ``reshape''
$\vb_{:,r}$ into $\Mb^{r}$. 

Let the matrix with all-zero rows in $(\Xb^2)^{\top}$ removed be
$(\Xbhat^2)^{\top}$. Then, Algorithm~\ref{algo:dfacto} shows the DFacTo
algorithm for computing $\Nb := \Xb^{1} \rbr{\Cb \odot \Bb}$. Here, the
input values are $(\Xbhat^2)^{\top}$, $\Bb$, $\Cb$, and $\Mb^r$
preallocated in CSR format. By storing the results of the product of
$(\Xbhat^2)^{\top}$ and $\bb_{:,r}$ directly into $value(\Mb^r)$, we can
obtain $\Mb^r$ because $\Mb^r$ was preallocated in the CSR format. Then,
the product of $\Mb^r$ and $\cbb_{:,r}$ yields the $r$-$th$ column of
$\Nb$. We obtain the output $\Nb$ by repeating these two sparse
matrix-vector products $R$ times.
\begin{algorithm}
  \SetKw{KwOutput}{Output:}
  \SetKw{KwInput}{Input:}
  
  \KwInput $(\Xbhat^{2})^{\top}$, $\Bb$, $\Cb$, $value(\Mb^r)$
  $col(\Mb^{r})$, $row(\Mb^{r})$ 
  
  \KwOutput $\Nb$

  \While{r=1, 2,\ldots, R}{
      $value(\Mb^r) \leftarrow (\Xbhat^{2})^{\top} \; \bb_{:,r}$ \\
      $\nbb_{:,r} \leftarrow \Mb^r \; \cbb_{:,r}$
  }
  \caption{DFacTo algorithm for Tensor Factorization}
  \label{algo:dfacto}
\end{algorithm}

It is immediately obvious that using the above lemmas to compute $\Nb$
requires no extra memory other than storing $\Mb^{r}$, which contains at
most $nnzc(\Xb^{2}) \leq \abr{\Omega^{\Xscrb}}$ non-zero
entries. Therefore, we completely avoid the intermediate data explosion
problem.  Moreover, the same subroutine can be used for both ALS and GD
(see Appendix~\ref{sec:dfacto-alsgd} for detailed pseudo-code).

\subsection{Distributed Memory Implementation}
\label{sec:DistrMemoryImpl}

Our algorithm is easy to parallelize using a master-slave architecture.
At every iteration, the master transmits $\Ab$, $\Bb$, and $\Cb$ to the
slaves. The slaves hold a fraction of the rows of $\Xb^{2}$ using which
a fraction of the rows of $\Nb$ is computed. By performing a
synchronization step, the slaves can exchange rows of $\Nb$. In ALS,
this $\Nb$ is used to compute $\Ab$ which is transmitted back to the
master. Then, the master updates $\Ab$, and the iteration proceeds. In
GD, the slaves transmit $\Nb$ back to the master, which computes $\nabla
\Ab$. Then, the master computes the step size by a line search
algorithm, updates $\Ab$, and the iteration proceeds.

\subsection{Complexity Analysis}
\label{sec:ComplexityAnalysis}

A naive computation of $\Nb$ requires $\rbr{JK+\abr{\Omega^{\Xscrb}}}R$ 
flops; forming $\Cb \odot \Bb$ requires $JKR$ flops and performing the 
matrix-matrix multiplication $\Xb^{1} \rbr{\Cb \odot \Bb}$ requires 
$\abr {\Omega^{\Xscrb}}R$ flops. Our algorithm requires only 
$\rbr{nnzc(\Xb^{2})+\abr{\Omega^{\Xscrb}}}R$ flops; 
$\abr{\Omega^{\Xscrb}}R$ flops for computing $\vb_{:,r}$ and 
$nnzc(\Xb^{2}) R$ flops for computing $\Mb^r \cbb_{:,r}$. Note that, 
typically, $nnzc(\Xb^{2}) \ll$ both $JK$ and $\abr{\Omega^{\Xscrb}}$ 
(see Table~\ref{tab:dataset}).  In terms of memory, the naive algorithm 
requires $O(JKR)$ extra memory, while our algorithm only requires 
$nnzc(\Xb^{2})$ extra space to store $\Mb^{r}$.

\section{Related Work}
\label{sec:RelatedWork}

Two papers that are most closely related to our work are the GigaTensor
algorithm proposed by \cite{KanPapHarFal12} and the Sparse Tensor
Toolbox of \cite{BadKol07}. As discussed above, both algorithms attack
the problem of computing $\Nb$ efficiently. In order to compute
$\nbb_{:, r}$, GigaTensor computes two intermediate matrices $\Nb_{1} :=
\Xb^{1} * \rbr{\one_{I} \odot \rbr{\cbb_{:,r} \otimes \one_{J}}^{\top}}$
and $\Nb_{2} := bin \rbr{\Xb^{1}} * \rbr{\one_{I} \odot \rbr{\one_{K}
    \otimes \bb_{:,r}}^{\top}}$. Next, $\Nb_{3} := \Nb_{1} * \Nb_{2}$ is
computed, and $\nbb_{:, r}$ is obtained by computing $\Nb_{3} \;
\one_{JK}$. As reported in \cite{KanPapHarFal12}, GigaTensor uses $2
\abr{\Omega^{\Xscrb}}$ extra storage and $5 \abr{\Omega^{\Xscrb}}$ flops
to compute one column of $\Nb$. The Sparse Tensor Toolbox stores a
tensor as a vector of non-zero values and a matrix of corresponding
indices. Entries of $\Bb$ and $\Cb$ are replicated appropriately to
create intermediate vectors. A Hadamard product is computed between the
non-zero entries of the matrix and intermediate vectors, and a selected
set of entries are summed to form columns of $\Nb$. The algorithm uses
$2 \abr{\Omega^{\Xscrb}}$ extra storage and $5 \abr{\Omega^{\Xscrb}}$
flops to compute one column of $\Nb$. See
Appendix~\ref{sec:IllustrDiffAlgor} for a detailed illustrative example
which shows all the intermediate calculations performed by our algorithm
as well as the algorithm of \cite{KanPapHarFal12} and
\cite{BadKol07}. 

Also, \cite{AcaDunKol11} suggests the gradient-based optimization algorithm 
of CANDECOMP/PARAFAC (CP) using the same method as \cite{BadKol07} 
to compute $\Xb^{1} \rbr{\Cb \odot \Bb}$. \cite{AcaDunKol11} refers to this
gradient-based optimization algorithm as CPOPT and to the ALS algorithm 
of CP using the method of \cite{BadKol07} as CPALS. Following 
\cite{AcaDunKol11}, we use these names, CPALS and CPOPT.

\section{Experimental Evaluation}
\label{sec:experiments}

Our experiments are designed to study the scaling behavior of DFacTo on
both publicly available real-world datasets as well as synthetically
generated data. We contrast the performance of DFacTo (ALS) with
GigaTensor~\cite{KanPapHarFal12} as well as with 
CPALS~\cite{BadKol07}, while the performance of DFacTo (GD) is 
compared with CPOPT~\cite{AcaDunKol11}. We also present results to 
show the scaling behavior of DFacTo when data is distributed across 
multiple machines.

\paragraph{Datasets}
\label{sec:Datasets}

See Table~\ref{tab:dataset} for a summary of the real-world datasets we
used in our experiments. The NELL-1 and NELL-2 datasets are from
\cite{KanPapHarFal12} and consists of (noun phrase 1, context, noun
phrase 2) triples from the ``Read the Web'' project
\cite{CarBetKisSetetal10}. NELL-2 is a version of NELL-1, which is
obtained by removing entries whose values are below a threshold.

The Yelp Phoenix dataset is from the Yelp Data Challenge
\footnote{https://www.yelp.com/dataset\_challenge/dataset}, while
Cellartracker, Ratebeer, Beeradvocate and Amazon.com are from the
Stanford Network Analysis Project (SNAP) home page. All these datasets
consist of product or business reviews. We converted them into a users
$\times$ items $\times$ words tensor by first splitting the text into
words, removing stop words, using Porter stemming~\cite{Porter80}, and
then removing user-item pairs which did not have any words associated with
them. In addition, for the Amazon.com dataset we filtered words that
appeard less than 5 times or in fewer than 5 documents. Note that the 
number of dimensions as well as the number of non-zero entries reported in
Table~\ref{tab:dataset} differ from those reported in \cite{McALes13}
because of our pre-processing.

\begin{table*}[h]
  \centering
  \begin{tabular}{lrrrrrrr}
    \hline
    Dataset & $I$ & $J$ & $K$ & $\abr{\Omega^{\Xscrbh}}$ & $nnzc(X^{1})$
    & $nnzc(X^{2})$ & $nnzc(X^{3})$ \\ 
    \hline
    Yelp Phoenix  &    45.97K &    11.54K &     84.52K &     9.85M &
    4.32M &   6.11M &    229.83K \\ 
    Cellartracker &    36.54K &   412.36K &    163.46K &    25.02M &
    19.23M &   5.88M &  1.32M \\ 
    NELL-2        &    12.09K &     9.18K &     28.82K &    76.88M &
    16.56M &  21.48M &    337.37K \\ 
    Beeradvocate  &    33.37K &    66.06K &    204.08K &    78.77M &
    18.98M &  12.05M &  1.57M \\ 
    Ratebeer      &    29.07K &   110.30K &    294.04K &    77.13M &
    22.40M &   7.84M &  2.85M \\ 
    \hline
    NELL-1        & 2.90M & 2.14M & 25.50M &   143.68M &
    113.30M & 119.13M & 17.37M \\ 
    Amazon        & 6.64M & 2.44M &  1.68M & 1.22B &
    525.25M & 389.64M & 29.91M \\ 
    \hline
  \end{tabular}
  \caption{Summary statistics of the datasets used in our experiments. }
  \label{tab:dataset}
\end{table*}

We also generated the following two kinds of synthetic data for our
experiments:
\begin{itemize}
\item the number of non-zero entries in the tensor is held fixed but we
  vary $I$, $J$, and $K$.
\item the dimensions $I$, $J$, and $K$ are held fixed but the number of
  non-zeros entries varies.
\end{itemize}
To simulate power law behavior, both the above datasets were generated 
using the following preferential attachment model \cite{BarAlb99}: the 
probability that a non-zero entry is added at index $(i, j, k)$ is given by $p_{i}
\times p_{j} \times p_{k}$, where $p_{i}$ (resp.\ $p_{j}$ and $p_{k}$)
is proportional to the number of non-zero entries at index $i$ (resp.\
$j$ and $k$).

\paragraph{Implementation and Hardware}
\label{sec:Hardware}

All experiments were conducted on a computing cluster where each node
has two 2.1 GHz 12-core AMD 6172 processors with 48 GB physical 
memory per node. Our algorithms are implemented in C++ using the Eigen
library\footnote{\url{http://eigen.tuxfamily.org}} and compiled with the
Intel Compiler. We downloaded Version 2.5 of the Tensor Toolbox, which
is implemented in 
MATLAB\footnote{\url{http://www.sandia.gov/~tgkolda/TensorToolbox/}}. 
Since open source code for GigaTensor is not freely available, we 
developed our own version in C++ following the description
in~\cite{KanPapHarFal12}.

\paragraph{Scaling on Real-World Datasets}
\label{sec:ScalingRealWorld}

Both CPALS and our implementation of GigaTensor are uni-processor codes.
Therefore, for this experiment we restricted ourselves to datasets 
which can fit on a single machine. When initialized with the same starting 
point, DFacTo and its competing algorithms will converge to the same 
solution. Therefore, we only compare the CPU time per iteration of the 
different algorithms. The results are summarized in 
Table~\ref{tab:realdata_itertime}. On many datasets DFacTo (ALS) is around 
5 times faster than GigaTensor and 10 times faster than CPALS; the 
differences are more pronounced on the larger datasets. Also, DFacTo (GD)
is around 4 times faster than CPOPT.

\begin{table}
  \centering
  \begin{tabular}{crrrrr}
    \hline
    Dataset & DFacTo (ALS) & GigaTensor & CPALS & DFacTo (GD) &
      CPOPT \\
    \hline
    Yelp Phoenix  &   9.52 &  26.82 &  46.52 & 13.57 & 45.9 \\
    Cellartracker &  23.89 &  80.65 & 118.25 & 35.82 & 130.32 \\
    NELL-2        &  32.59 & 186.30 & 376.10 & 80.79 & 386.25 \\
    Beeradvocate  &  43.84 & 224.29 & 364.98 & 94.85 & 481.06 \\
    Ratebeer      &  44.20 & 240.80 & 396.63 & 87.36 & 349.18 \\
    NELL-1        & 322.45 & 772.24 &   -    & 742.67 & - \\
    \hline
  \end{tabular}
  \caption{Times per iteration (in seconds) of DFacTo (ALS), GigaTensor, 
    CPALS, DFacTo (GD), and CPOPT on datasets which can fit in a single 
    machine.}
  \label{tab:realdata_itertime}
\end{table}

The difference in performance between DFacTo (ALS) and CPALS and 
between DFacTo (GD) and CPOPT can partially be explained by the fact that 
DFacTo (ALS, GD) is implemented in C++ while CPALS and CPOPT use 
MATLAB. However, it must be borne in mind that both MATLAB and our 
implementation use an optimized BLAS library to perform their 
computationally intensive numerical linear algebra operations.

Compared to the Map-Reduce version implemented in Java and used for the
experiments reported in~\cite{KanPapHarFal12}, our C++ implementation
of GigaTensor is significantly faster and more optimized. As
per~\cite{KanPapHarFal12}, the Java implementation took approximately
10,000 seconds per iteration to handle a tensor with around $10^{9}$
non-zero entries, when using 35 machines. In contrast, the C++ version
was able to handle one iteration of the ALS algorithm on the NELL-1
dataset on a single machine in 772 seconds. However, because 
DFacTo (ALS) uses a better algorithm, it is able to handsomely outperform 
GigaTensor and only takes 322 seconds per iteration.

Also, the execution time of DFacTo (GD) is longer than that of 
DFacTo (ALS) because DFacTo (GD) spends more time on the line search
algorithm to obtain an appropriate step size.

\paragraph{Scaling across Machines}
\label{sec:ScalacrossMach}

Our goal is to study scaling behavior of the time per iteration as
datasets are distributed across different machines. Towards this end we
worked with two datasets. NELL-1 is a moderate-size dataset which our
algorithm can handle on a single machine, while Amazon is a large
dataset which does not fit on a single machine. Table~\ref{tab:parallel}
shows that the iteration time decreases as the number of machines increases on the NELL-1 and Amazon datasets. While the decrease in iteration time is not completely linear, the computation time excluding both synchronization and line search time decreases linearly. The Y-axis in Figure~\ref{fig:su} indicates $T_4 / T_n$ where $T_n$ is the single iteration time with $n$ machines on the Amazon dataset.

\begin{table}
  \centering
  \begin{tabular}{crrrrrrrr}
    & \multicolumn{4}{c}{DFacTo (ALS)} & \multicolumn{4}{c}{DFacTo (GD)} \\
    \hline
    & \multicolumn{2}{c}{NELL-1} & \multicolumn{2}{c}{Amazon} 
    & \multicolumn{2}{c}{NELL-1} & \multicolumn{2}{c}{Amazon} \\
    \hline
    Machines & Iter.\ & CPU & Iter.\ & CPU & Iter.\ & CPU & Iter.\ & CPU \\
    \hline
    1  & 322.45 & 322.45 &   -    &   -   
        & 742.67 & 104.23 & - & - \\
    2  & 205.07 & 167.29 &   -    &   -   
        & 492.38 & 55.11 & - & - \\
    4  & 141.02 & 101.58 & 480.21 & 376.71 
        & 322.65 & 28.55 & 1143.7 & 127.57 \\
    8  &  86.09 &  62.19 & 292.34 & 204.41 
        & 232.41 & 16.24 & 727.79 & 62.61 \\
    16 &  81.24 &  46.25 & 179.23 &  98.07 
         & 178.92 & 9.70 & 560.47 & 28.61 \\
    32 &  90.31 &  34.54 & 142.69 &  54.60 
         & 209.39 & 7.45 & 471.91 & 15.78 \\
    \hline
  \end{tabular}
  \caption{Total Time and CPU time per iteration (in seconds) as a function of 
    number of machines for the NELL-1 and Amazon datasets. }
  \label{tab:parallel}
\end{table}

\begin{figure}[h]
  \centering
  \begin{subfigure}[b]{0.49\textwidth}
    \includegraphics[width=\textwidth]{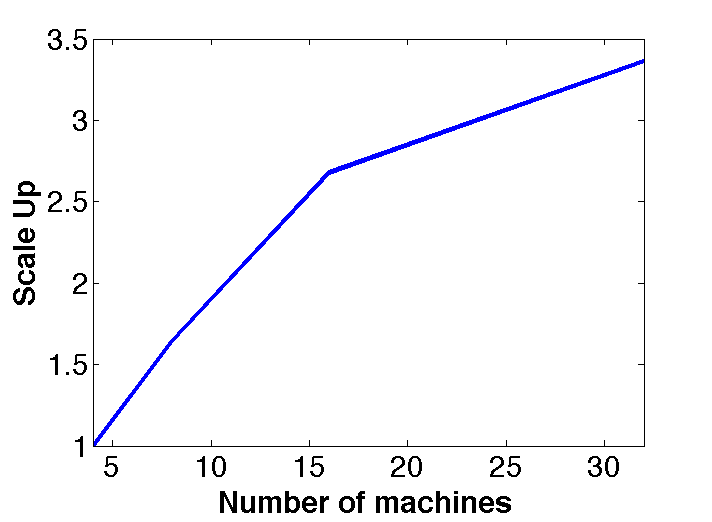}
    \caption{DFacTo(ALS)}
    \label{fig:su-als}
  \end{subfigure}
  \begin{subfigure}[b]{0.49\textwidth}
    \includegraphics[width=\textwidth]{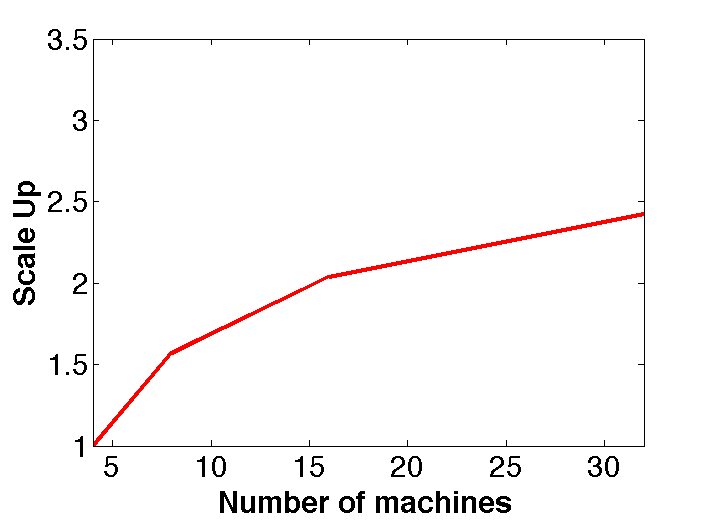}
    \caption{DFacTo(GD)}
    \label{fig:su-gd}
  \end{subfigure}
  \caption{The scalability of DFacTo with respect to the number of machines 
    on the Amazon dataset}
  \label{fig:su}
\end{figure}

\paragraph{Synthetic Data Experiments}
\label{sec:SynthDataExper}

We perform two experiments with synthetically generated tensor data. In
the first experiment we fix the number of non-zero entries to be
$10^{6}$ and let $I=J=K$ and vary the dimensions of the tensor. For the
second experiment we fix the dimensions and let $I=J=K$ and the number
of non-zero entries is set to be $2I$. The scaling behavior of the three
algorithms on these two datasets is summarized in
Tables~\ref{tab:random1_data_itertime}
and~\ref{tab:random2_data_itertime}.  Since we used a preferential
attachment model to generate the datasets, the non-zero indices exhibit
a power law behavior. Consequently, the number of columns with non-zero
elements ($nnzc(\cdot)$) for $\Xb^{1}$, $\Xb^{2}$ and $\Xb^{3}$ is very 
close to the total number of non-zero entries in the tensor. Therefore, as
predicted by theory, DFacTo (ALS, GD) does not enjoy significant speedups 
when compared to GigaTensor, CPALS and CPOPT. However, it must be 
noted that DFacto (ALS) is faster than either GigaTensor or CPALS in all but 
one case and DFacTo (GD) is faster than CPOPT in all cases. We attribute 
this to better memory locality which arises as a consequence of reusing the 
memory for $\Nb$ as discussed in Section~\ref{sec:NewModel}. 

\begin{table}[h]
  \centering
  \begin{tabular}{crrrrr}
    \hline
    $I=J=K$ & DFacTo (ALS) & GigaTensor & CPALS 
      & DFacTo (GD) & CPOPT \\
    \hline
    $10^{4}$ &  1.14 &  2.80 &   5.10 & 2.32 & 5.21 \\
    $10^{5}$ &  2.72 &  6.71 &   6.11 & 5.87 & 11.70 \\
    $10^{6}$ &  7.26 & 11.86 &  16.54 & 16.51 & 29.13 \\
    $10^{7}$ & 41.64 & 38.19 & 175.57 & 121.30& 202.71 \\
    \hline
  \end{tabular}
  \caption{Time per iteration (in seconds) for DFacTo (ALS), GigaTensor, 
    CPALS, DFacTo (GD), and CPOPT on synthetic datasets. The number of 
    non-zero elements is $10^{6}$, and the dimensions of the tensor are varied. }
  \label{tab:random1_data_itertime}
\end{table}

\begin{table}[h]
  \centering
  \begin{tabular}{crrrrr}
    \hline
    $I=J=K$ & DFacTo (ALS) & GigaTensor & CPALS 
      & DFacTo (GD) & CPOPT \\
    \hline
    $10^{4}$ &   0.05 &   0.09 &   0.52 & 0.09 & 0.57 \\
    $10^{5}$ &   0.92 &   1.61 &   1.50 & 1.81 & 2.98 \\
    $10^{6}$ &  12.06 &  22.08 &  15.84 & 21.74 & 26.04 \\
    $10^{7}$ & 144.48 & 251.89 & 214.37 & 275.19 & 324.2 \\
    \hline
  \end{tabular}
  \caption{Time per iteration (in seconds) for DFacTo (ALS), GigaTensor, 
    CPALS, DFacTo (GD), and CPOPT on synthetic datasets. The dimensions 
    of the tensor are varied and the number of non-zero elements is set to $2I$. }
  \label{tab:random2_data_itertime}
\end{table}

\section{Discussion and Conclusion}
\label{sec:Discussion}

We presented a technique for significantly speeding up the Alternating
Least Squares (ALS) and the Gradient Descent (GD) algorithm for tensor
factorization by exploiting properties of the Khatri-Rao product. Not
only is our algorithm, DFacTo, computationally attractive, but it is also
more memory efficient compared to existing algorithms. Furthermore, we
presented a strategy for distributing the computations across multiple
machines.

We hope that the availability of a scalable tensor factorization
algorithm will enable practitioners to work on more challenging tensor
datasets, and therefore lead to advances in the analysis and
understanding of tensor data. Towards this end we intend to make our
code freely available for download under a permissive open source
license.

Although we mainly focused on tensor factorization using ALS and GD, it
is worth noting that one can extend the basic ideas behind DFacTo to
other related problems such as joint matrix completion and tensor
factorization. We present such a model in Appendix
\ref{sec:JointMatcomTenfac}. In fact, we believe that this joint matrix
completion and tensor factorization model by itself is somewhat new and
interesting in its own right, despite its resemblance to other joint
models including tensor factorization such as \cite{AcaKolDun11}. In our
joint model, we are given a user $\times$ item ratings matrix $\Yb$, and
some side information such as a user $\times$ item $\times$ words tensor
$\Xscrb$. Preliminary experimental results suggest that jointly
factorizing $\Yb$ and $\Xscrb$ outperforms vanilla matrix
completion. Please see Appendix~\ref{sec:JointMatcomTenfac} for details
of the algorithm and some experimental results.

\newpage

\clearpage
\appendix

\section{Matrix Products and Related Identities}
\label{app:MatrixProducts}
\begin{definition}
  The Kronecker product $\Ab \otimes \Bb \in \RR^{mp \times nq}$ of
  matrices $\Ab \in \RR^{m \times n}$ and $\Bb \in \RR^{p \times q}$ is
  defined as
  \begin{align}
    \label{eq:krondef}
    \Ab \otimes \Bb = \mymatrix{cccc}{
      a_{1,1} \Bb & a_{1,2} \Bb & \ldots & a_{1,n} \Bb \\
      \vdots & \vdots & \vdots & \vdots \\
      a_{m,1} \Bb & a_{m,2} \Bb & \ldots & a_{m,n} \Bb}. 
  \end{align}  
\end{definition}
\begin{definition}
  The Khatri-Rao product $\Ab \odot \Bb \in \RR^{mp \times n}$ of
  matrices $\Ab \in \RR^{m \times n}$ and $\Bb \in \RR^{p \times n}$ is
  given by the Kronecker product of the corresponding columns of the two
  matrices:
  \begin{align}
    \label{eq:khatraodef}
    \Ab \odot \Bb = \mymatrix{cccc}{ \ab_{:,1} \otimes \bb_{:,1} &
      \ab_{:,2} \otimes \bb_{:,2} & \ldots & \ab_{:,n} \otimes \bb_{:,n}
    }.
  \end{align}    
\end{definition}
\begin{definition}
  The Hadamard product $\Ab \ast \Bb \in \RR^{n \times m}$  of two
  conforming matrices $\Ab \in \RR^{n \times m}$ and $\Bb \in \RR^{n
    \times m}$ is given by 
  \begin{align}
    \label{eq:hadamard}
    \Ab \ast \Bb = \mymatrix{cccc}{a_{1,1} b_{1,1} & a_{1,2} b_{1,2} &
      \ldots & a_{1,m} b_{1,m} \\
      \vdots & \vdots & \vdots & \vdots \\
      a_{n,1} b_{n,1} & a_{n,2} b_{n,2} &
      \ldots & a_{n,m} b_{n,m}}
  \end{align}
\end{definition}
\begin{definition}
  The outer product $\ab \circ \bb$ of vectors $\ab \in \RR^{m}$ and
  $\bb \in \RR^{n}$ is given by a matrix $\Mb \in \RR^{m \times n}$ such
  that
  \begin{align}
    \label{eq:outer-matrix}
    m_{i,j} = a_{i} b_{j}. 
  \end{align}
  The definition can be extended to tensors by defining the outer
  product $\ab \circ \bb \circ \cbb$ of three vectors $\ab \in \RR^{m}$,
  $\bb \in \RR^{n}$, and $\cbb \in \RR^{p}$ as a tensor $\Mscrb \in
  \RR^{m \times n \times p}$ with
  \begin{align}
    \label{eq:outer-tensor}
    m_{i,j,k} = a_{i} b_{j} c_{k}. 
  \end{align}
\end{definition}
\begin{definition}
  Given a matrix $\Ab \in \RR^{n \times m}$, the linear operator
  $vec(\Ab)$ yields a vector $\xb \in \RR^{nm}$, which is obtained by
  stacking the columns of $\Ab$:
  \begin{align}
    \label{eq:vecdef}
    vec(\Ab) = \xb = \mymatrix{c}{
      \ab_{:,1} \\
      \ab_{:,2} \\
      \vdots \\
      \ab_{:,n}}.
  \end{align}
  Observe that 
  \begin{align}
    \label{eq:vecidx}
    x_{i+(j-1)n} = a_{i,j}. 
  \end{align}
  On the other hand, given a vector $\xb \in \RR^{nm}$, the operator
  $unvec_{(n,m)}(\xb)$ yields a matrix $\Ab \in \RR^{n \times m}$:
  \begin{align}
    \label{eq:unvecdef}
      unvec_{(n,m)}(\xb) = \Ab = \mymatrix{cccc}{
      \ab_{:,1} & \ab_{:,2} & \ldots & \ab_{:,n}}.
  \end{align}
\end{definition}
The Kronecker product satisfies the following well known relationship
(see \eg, proposition 7.1.9 of \cite{Bernstein05}):
\begin{align}
  \label{eq:vecabc}
  vec(\Ab \Bb \Cb) = \rbr{\Cb^{\top} \otimes \Ab} vec(\Bb). 
\end{align}
The Khatri-Rao product satisfies (see \eg, chapter 2 of \cite{SmiBroGel04}):
\begin{align}
  \label{eq:kr-product}
  \rbr{\Ab \odot \Bb}^{\top} \rbr{\Ab \odot \Bb}
   = \Ab^{\top} \Ab \ast  \Bb^{\top} \Bb.
\end{align}
Plugging this into the definition of the Moore-Penrose pseudo-inverse
\cite{Bernstein05} immediately shows that
\begin{align}
  \label{eq:kr-pseudo}
  \rbr{\Ab \odot \Bb}^{\dagger} = \rbr{\Ab^{\top} \Ab \ast \Bb^{\top}
    \Bb}^{-1} \rbr{\Ab \odot \Bb}^{\top}.
\end{align}

\subsection{An Example of Flattening Tensors}
\label{app:FlatteningTensors}
Let $\Xscrb$ be a $3 \times 4 \times 3$ tensor with frontal slices
\begin{align*}
  \mymatrix{cccc}{ \color{blue} 1 & \color{blue} 1
    & \color{blue} 4 & \color{blue} 2 \\
    \color{blue}3 & \color{blue}4 & \color{blue}5 & \color{blue}3 \\
    \color{blue}5 & \color{blue}0 & \color{blue}5 & \color{blue}1 }
  \mymatrix{cccc}{ \color{red}4 & \color{red}5 &
    \color{red}5 & \color{red}1 \\
    \color{red}1 & \color{red}1 & \color{red}1 & \color{red}4  \\
    \color{red}1 & \color{red}1 & \color{red}0 & \color{red}3 }
  \mymatrix{cccc}{ \color{black}1 & \color{black}0 &
    \color{black}2 & \color{black}4 \\
    \color{black}4 & \color{black}1 & \color{black}5 & \color{black}1 \\
    \color{black}5 & \color{black}2 & \color{black}4 & \color{black}1},
  \text{ then }
\end{align*}
\begin{align*}
  \Xb^{1} & = \mymatrix{cccc|cccc|cccc}{ \color{blue} 1 & \color{blue} 1 &
    \color{blue} 4 & \color{blue} 2 & \color{red}4 & \color{red}5 &
    \color{red}5 & \color{red}1 & \color{black}1 & \color{black}0 &
    \color{black}2 & \color{black}4 \\
    \color{blue}3 & \color{blue}4 & \color{blue}5 & \color{blue}3 &
    \color{red}1 & \color{red}1 & \color{red}1 & \color{red}4 &
    \color{black}4 & \color{black}1 & \color{black}5 & \color{black}1 \\
    \color{blue}5 & \color{blue}0 & \color{blue}5 & \color{blue}1 &
    \color{red}1 & \color{red}1 & \color{red}0 & \color{red}3 &
    \color{black}5 & \color{black}2 & \color{black}4 & \color{black}1}\\
  \Xb^{2} & = \mymatrix{ccc|ccc|ccc}{ \color{blue} 1 & \color{red} 4 &
    \color{black} 1 & \color{blue} 3 & \color{red} 1 & \color{black} 4 &
    \color{blue} 5 & \color{red} 1 & \color{black} 5  \\
    \color{blue} 1 & \color{red} 5 & \color{black} 0 & \color{blue} 4 &
    \color{red} 1 & \color{black} 1 & \color{blue} 0 & \color{red} 1 &
    \color{black} 2  \\
    \color{blue} 4 & \color{red} 5 & \color{black} 2 & \color{blue} 5 &
    \color{red} 1 & \color{black} 5 & \color{blue} 5 & \color{red} 0 &
    \color{black} 4  \\
    \color{blue} 2 & \color{red} 1 & \color{black} 4 & \color{blue} 3 &
    \color{red} 4 & \color{black} 1 & \color{blue} 1 & \color{red} 3 &
    \color{black} 1} \\
  \Xb^{3} & = \mymatrix{ccc|ccc|ccc|ccc}{ \color{blue} 1 & \color{blue}
    3 & \color{blue} 5 & \color{blue} 1 & \color{blue} 4 & \color{blue}
    0 & \color{blue} 4 & \color{blue} 5 & \color{blue} 5 & \color{blue}
    2 & \color{blue} 3 & \color{blue} 1 \\
    \color{red} 4 & \color{red} 1 & \color{red} 1 & \color{red} 5 &
    \color{red} 1 & \color{red} 1 & \color{red} 5 & \color{red} 1 &
    \color{red} 0 & \color{red} 1 & \color{red} 4 & \color{red} 3  \\
    \color{black} 1 & \color{black} 4 & \color{black} 5 & \color{black}
    0 & \color{black} 1 & \color{black} 2 & \color{black} 2 &
    \color{black} 5 & \color{black} 4 & \color{black} 4 & \color{black}
    1 & \color{black} 1 }
\end{align*}

\subsection{Proof of Lemma~\ref{lem:flat-vec}}
\label{app:proof-lem1}
\begin{proof}
  Using \eqref{eq:x1} and \eqref{eq:x1-blockIndices}, we can write 
  \begin{align*}
    x^{3}_{k,i+(j-1)I} = x^{1,k}_{i,j} = x^{1}_{i,j+(k-1)J}. 
  \end{align*}
  The result for $(n,n') = (1, 3)$ follows directly from
  \eqref{eq:vecidx} by letting $k=m$. For other values of $n$ and $n'$,
  the arguments are analogous.
\end{proof}

\newpage
\section{Review of ALS}
\label{sec:ALS}

In this section, we will introduce the CANDECOMP/PARAFAC(CP) 
decomposition model, and the ALS algorithm. The CP decomposition 
is a multi-way tensor factorization model. Given a tensor $\Xscrb \in 
\RR^{I \times J \times K}$, the $R$-rank CP decomposition of 
$\Xscrb$ is given by three matrices $\Ab \in \RR^{I \times R}$, $\Bb \in 
\RR^{J \times R}$, and $\Cb \in \RR^{K \times R}$ such that
\begin{align}
  \label{eq:parafac}
  \Xscrb \approx \sum_{r=1}^{R} \lambda_{r} \cdot \ab_{:,r} \circ \bb_{:,r} \circ
  \cbb_{:,r}.
\end{align}
Note that the columns of $\Ab$, $\Bb$, and $\Cb$ are normalized to have
unit length. The CP decomposition is computed by solving
\begin{align}
  \label{eq:tensorapprx}
  \min_{\Xscrbh} \nbr{\Xscrb - \Xscrbh} \quad \text{with} \quad \Xscrbh = 
    \sum_{r=1}^{R} \lambda_{r} \cdot \ab_{:,r} \circ \bb_{:,r} \circ
    \cbb_{:,r}. 
\end{align}
The most popular method to solve the above problem is the Alternating
Least Squares (ALS) algorithm \cite{KolBad09}. The basic idea here is to
fix all the matrices except one, and solve a least squares problem. Fixing
$\Bb$ and $\Cb$ and rewriting \eqref{eq:tensorapprx}, this amounts to
setting
\begin{align}
  \label{eq:als-astep}
  \Abhat \leftarrow \argmin_{\Abhat} \nbr{\Xb^{1} - 
    \Abhat \rbr{\Cb \odot \Bb}^{\top}}
\end{align}
The optimal solution of \eqref{eq:als-astep} can be rewritten using
\eqref{eq:kr-pseudo} as
\begin{align} 
  \label{eq:als-astep-sol}
  \Abhat & = \Xb^{1} \rbr{\rbr{\Cb \odot \Bb}^{\top}}^{\dagger} \\
  \label{eq:als-astep-sol-rw}
  & = \Xb^{1} \rbr{\Cb \odot \Bb}\rbr{\Cb^{\top}\Cb \ast
    \Bb^{\top}\Bb}^{-1}.
\end{align}
We obtain $\Ab$ by normalizing the columns of $\Abhat$. The ALS
procedure repeats analogously to find $\Bbhat$ and $\Cbhat$ until a
stopping criterion is met. The general CP-ALS algorithm is
summarized in Algorithm~\ref{algo:parfac}. 

\begin{algorithm}
  \SetKw{KwInit}{Initialize:}
  \SetKw{KwInput}{Input:}

  \KwInput $\Xb^{1}$, $\Xb^{2}$, $\Xb^{3}$

  \KwInit $\Ab$, $\Bb$, $\Cb$ 
  
  \While{stopping criterion not met}{
    $\Mb_1 \leftarrow \Xb^{1} \rbr{\Cb \odot \Bb}$
    \label{algo:x1cb-als}

    $\Ab \leftarrow \Mb_1 \rbr{\Cb^{\top} \Cb \ast \Bb^{\top} \Bb}^{-1}$

    Normalize columns of $\Ab$

    $\Mb_2 \leftarrow \Xb^{2} \rbr{\Ab \odot \Cb}$
    
    $\Bb \leftarrow \Mb_2 \rbr{\Ab^{\top} \Ab \ast \Cb^{\top} \Cb}^{-1}$
    
    Normalize columns of $\Bb$
    
    $\Mb_3 \leftarrow \Xb^{3} \rbr{\Bb \odot \Ab}$
        
    $\Cb \leftarrow \Mb_3 \rbr{\Bb^{\top} \Bb \ast \Ab^{\top} \Ab}^{-1}$
  
    Normalize columns of $\Cb$
  }
  \caption{CP-ALS algorithm}
  \label{algo:parfac}
\end{algorithm}

In tensor factorization, occasionally the problem of overfitting  occurs. Thus, 
we  add regularization terms to the objective function. Accordingly, we obtain 
the following new objective function:
\begin{align}
  \label{eq:tensorapprx-reg}
  \min_{\Xscrbh} \nbr{\Xscrb - \Xscrbh} + \frac{1}{2} \; \lambda 
    \rbr{\nbr{\Ab}^2 + \nbr{\Bb}^2 + \nbr{\Cb}^2} \quad \text{with} \quad 
    \Xscrbh = \sum_{r=1}^{R} \lambda_{r} \cdot \ab_{:,r} \circ \bb_{:,r} \circ
    \cbb_{:,r}. 
\end{align}
Then, the optimal solution of \eqref{eq:tensorapprx-reg} becomes
\begin{align}
  \label{eq:tensorapprx-ref-sol}
  \Abhat = \Xb^{1} \rbr{\Cb \odot \Bb}\rbr{\Cb^{\top}\Cb \ast
    \Bb^{\top}\Bb + \lambda \Ib}^{-1}.
\end{align}

\newpage
\section{Review of GD}
\label{sec:GD}

In this section, we will introduce the GD algorithm using 
CANDECOMP/PARAFAC(CP) decomposition model introduced in 
Section~\ref{sec:ALS}. This algorithm uses the same objective function as
CP-ALS except for normalization. Thus, we solve
\begin{align}
  \label{eq:gd-obj-fn}
  \min_{\Xscrbh} \sum_{i,j,k} \frac{1}{2} \rbr{x_{i,j,k} - \hat{x}_{i,j,k}}^{2}  
  \quad \text{ s.t. } \Xscrbh = \sum_{r=1}^{R} \; \ab_{:,r} \circ \bb_{:,r} 
    \circ \cbb_{:,r}
\end{align}
We can rewrite the equation in \eqref{eq:gd-obj-fn} as
\begin{align}
  \label{eq:gd-prob-A}
  f = \frac{1}{2} \nbr{\Xb^{1} - \Ab \rbr{\Cb \odot \Bb}^{\top}}^2.
\end{align}
Next, the gradient of \eqref{eq:gd-prob-A} with respect to $\Ab$
can be presented as
\begin{align}
  \label{eq:gd-A}
  \pwrt{\Ab}{f} = - \Xb^{1} \rbr{\Cb \odot \Bb} + 
    \Ab \rbr{\Cb^{\top} \Cb \ast \Bb^{\top} \Bb}.
\end{align}
In GD, the gradient of $f$ will be written as
\begin{align}
  \label{eq:grad-f}
  \nabla f= \mymatrix{c}{vec \rbr{\pwrt{\Ab}{f}} \\ 
    vec \rbr{\pwrt{\Bb}{f}} \\ vec \rbr{\pwrt{\Cb}{f}}}.
\end{align}
Then, we can compute the factor matrices $\Ab$, $\Bb$ and $\Cb$
with $\hat{f} = f - \alpha \; \nabla f$. The general CP-GD algorithm is
summarized in Algorithm~\ref{algo:gd}. 

\begin{algorithm}
  \SetKw{KwInit}{Initialize:}
  \SetKw{KwInput}{Input:}
  
  \KwInput $\Xb^{1}$, $\Xb^{2}$, $\Xb^{3}$

  \KwInit $\Ab$, $\Bb$, $\Cb$ 
  
  \While{stopping criterion not met}{
    $\Mb_1 \leftarrow \Xb^{1} \rbr{\Cb \odot \Bb}$
    \label{algo:x1cb-gd}
    
    $\nabla \Ab \leftarrow -\Mb_1+\Ab \rbr{\Cb^{\top} \Cb \ast \Bb^{\top} \Bb}$

    $\Mb_2 \leftarrow \Xb^{2} \rbr{\Ab \odot \Cb}$
    
    $\nabla \Bb \leftarrow -\Mb_2+\Bb \rbr{\Ab^{\top} \Ab \ast \Cb^{\top} \Cb}$
    
    $\Mb_3 \leftarrow \Xb^{3} \rbr{\Bb \odot \Ab}$
      
    $\nabla \Cb \leftarrow -\Mb_3+\Cb \rbr{\Bb^{\top} \Bb \ast \Ab^{\top} \Ab}$
      
    Calculate Step Size $\alpha$
    
    $\Ab \leftarrow \Ab - \alpha \nabla \Ab$
    
    $\Bb \leftarrow \Bb - \alpha \nabla \Bb$
    
    $\Cb \leftarrow \Cb - \alpha \nabla \Cb$
  }  
  \caption{CP-OPT algorithm}
  \label{algo:gd}
\end{algorithm}
We add regularization terms to the objective function to solve the problem of 
overfitting. The new objective function is now
\begin{align}
  \label{eq:gd-obj-fn-reg}
  \min_{\Xscrbh} \sum_{i,j,k} \frac{1}{2} \rbr{x_{i,j,k} - \hat{x}_{i,j,k}}^{2} 
  \frac{1}{2} \; \lambda \rbr{\nbr{\Ab}^2 + \nbr{\Bb}^2 + \nbr{\Cb}^2} 
  \quad \text{ s.t. } \Xscrbh = \sum_{r=1}^{R} \; \ab_{:,r} \circ \bb_{:,r} 
    \circ \cbb_{:,r}
\end{align}
Then, the gradient of \eqref{eq:gd-obj-fn-reg} with respect to $\Ab$ becomes
\begin{align}
  \label{eq:gd-A}
  \pwrt{\Ab}{f} = - \Xb^{1} \rbr{\Cb \odot \Bb} + 
    \Ab \rbr{\Cb^{\top} \Cb \ast \Bb^{\top} \Bb + \lambda \Ib}.
\end{align}

\newpage
\section{Illustrative Example}
\label{sec:IllustrDiffAlgor}

We illustrate the differences between our algorithm for computing $\Mb
:= \Xb^{1} \rbr{\Cb \odot \Bb}$ vs the algorithms proposed by
\cite{BadKol07} and \cite{KanPapHarFal12} on the following example:
Consider $\Xscrb \in \RR^{2 \times 3 \times 3}$ and let
\begin{align*}
  \Xb^{1}=\mymatrix{ccc|ccc|ccc}{1 & 0 & 6 & 0 & 4 & 7 & 2 & 0 & 0 \\
    0 & 0 & 0 & 3 & 0 & 8 & 0 & 5 & 9} \quad \text{ and } \quad
  \Xb^{2}=\mymatrix{ccc|ccc}{1 & 0 & 2 & 0 & 3 & 0 \\
    0 & 4 & 0 & 0 & 0 & 5 \\
    6 & 7 & 0 & 0 & 8 & 9}. 
\end{align*}
Moreover, let
\begin{align*}
  \Bb = \mymatrix{cc}{3 & 1 \\ 1 & 1 \\ 2 & 3} \text{ and } \Cb =
  \mymatrix{cc}{1 & 2 \\ 2 & 1 \\ 1 & 3}. 
\end{align*}
\cite{BadKol07} propose to store the above tensor as 
\begin{align*}
  \vb^{\Xscrb}&=\mymatrix{c}{1 \\ 2 \\ 3 \\ 4 \\ 5 \\ 6 \\ 7 \\ 8 \\
    9} \text{ and } \Sbb^{\Xscrb} = \mymatrix{ccc}{ 0 & 0 & 0 \\ 0 & 0 & 2
    \\ 1 & 0 & 1 \\ 0 & 1 & 1 \\ 1 & 1 & 2 \\ 0 & 2 & 0 \\ 0 & 2 & 1 \\
    1 & 2 & 1 \\ 1 & 2 & 2 },
\end{align*}
where $\vb^{\Xscrb}$ denotes the vector of non-zero entries of $\Xscrb$,
while $\Sbb^{\Xscrb}$ denotes the corresponding vector of indices. The
algorithm proposed in Sections 3.2.4 and 3.2.7 of \cite{BadKol07}
first computes
\begin{align*}
  \mb_{1} &= \mymatrix{c}{1 \\ 2 \\ 3 \\ 4 \\ 5 \\ 6 \\ 7 \\ 8 \\ 9} *
  \mymatrix{c}{3 \\ 3 \\ 3 \\ 1 \\ 1 \\ 2 \\ 2 \\ 2 \\ 2} *
  \mymatrix{c}{1 \\ 1 \\ 2 \\ 2 \\ 1 \\ 1 \\ 2 \\ 2 \\ 1}
  = \mymatrix{c}{3 \\ 6 \\ 18 \\ 8 \\ 5 \\ 12 \\ 28 \\ 32 \\ 18}.
\end{align*}
The above Hadamard product involves three vectors namely $\vb_{\Xscrb}$,
a vector formed by repeating entries of $\Bb_{:,1}$ based on
$\Sbb^{\Xscrb}_{:,2}$, and a vector formed by repeating entries of
$\Cb_{:, 1}$ based on $\Sbb^{\Xscrb}_{:,3}$. Similarly, we compute the
vector below but by using $\vb^{\Xscrb}$ and repeated entries from
$\Bb_{:,2}$ and $\Cb_{:, 2}$ respectively:
\begin{align*}    
  \mb_{2} &= \mymatrix{c}{1 \\ 2 \\ 3 \\ 4 \\ 5 \\ 6 \\ 7 \\ 8 \\ 9} *
  \mymatrix{c}{1 \\ 1 \\ 1 \\ 1 \\ 1 \\ 3 \\ 3 \\ 3 \\ 3} *
  \mymatrix{c}{2 \\ 3 \\ 1 \\ 1 \\ 3 \\ 2 \\ 1 \\ 1 \\ 3}
  = \mymatrix{c}{2 \\ 6 \\ 3 \\ 4 \\ 15 \\ 36 \\ 21 \\ 24 \\ 81}. 
\end{align*}
Finally, we use 
\begin{align*}
  \Sbb^{\Xscrb}_{:,1} = \mymatrix{c}{0 \\ 0 \\ 1 \\ 0 \\ 1 \\
    0 \\ 0 \\ 1 \\ 1} 
\end{align*}
to sum the appropriate entries of $\mb_{1}$ and $\mb_{2}$ to form
$\Mb$:
\begin{align*}
  \Mb &= \mymatrix{cc}{3+6+8+12+28 & 2+6+4+36+21 \\
    18+5+32+18 & 3+15+24+81} 
    = \mymatrix{cc}{57 & 69 \\ 73 & 123}. 
\end{align*}
The algorithm uses $2 \abr{\Omega^{\Xscrb}}$ extra storage and $5
\abr{\Omega^{\Xscrb}}$ flops to compute one column of $\Mb$.

On the other hand, the algorithm of \cite{KanPapHarFal12} computes
$\Mb$ as follows:
\begin{align*}
  \Nb_{1} &= \Xb^{1} * \rbr{\one_{I} \odot \rbr{\cbb_{:,0} \otimes
      \one_{J}}^{\top}} \\
  &=\mymatrix{ccccccccc}{1 & 0 & 6 & 0 & 4 & 7 & 2 & 0 & 0 \\
    0 & 0 & 0 & 3 & 0 & 8 & 0 & 5 & 9} * 
  \mymatrix{ccccccccc}{1 & 1 & 1 & 2 & 2 & 2 & 1 & 1 & 1 \\
    1 & 1 & 1 & 2 & 2 & 2 & 1 & 1 & 1} \\
  &= \mymatrix{ccccccccc}{1 & 0 & 6 & 0 & 8 & 14 & 2 & 0 & 0 \\
    0 & 0 & 0 & 6 & 0 & 16 & 0 & 5 & 9}.
\end{align*}
Here $\one_{n}$ denotes a vector of size $n$ with all entries set to
one. Similarly, if $bin \rbr{\Xb^{1}}$ denotes an indicator matrix for
the non-zero entries of $\Xb^{1}$, then
\begin{align*}
  \Nb_{2} &= bin \rbr{\Xb^{1}} * \rbr{\one_{I} \odot \rbr{\one_{K}
      \otimes \bb_{:,0}}^{\top}} \\
  &=\mymatrix{ccccccccc}{1 & 0 & 1 & 0 & 1 & 1 & 1 & 0 & 0 \\
    0 & 0 & 0 & 1 & 0 & 1 & 0 & 1 & 1} * 
  \mymatrix{ccccccccc}{3 & 1 & 2 & 3 & 1 & 2 & 3 & 1 & 2 \\
    3 & 1 & 2 & 3 & 1 & 2 & 3 & 1 & 2} \\
  &=\mymatrix{ccccccccc}{3 & 0 & 2 & 0 & 1 & 2 & 3 & 0 & 0 \\
    0 & 0 & 0 & 3 & 0 & 2 & 0 & 1 & 2}. 
\end{align*}
Finally we compute $\Nb_{3} = \Nb_{1} * \Nb_{2}$ via
\begin{align*}
  \Nb_{3}  =\mymatrix{ccccccccc}{3 & 0 & 12 & 0 & 8 & 28 & 6 & 0 & 0 \\
    0 & 0 & 0 & 18 & 0 & 32 & 0 & 5 & 18}
\end{align*}
to obtain
\begin{align*}
  \mb_{:,1} & = \Nb_{3} \; \one_{JK} = \mymatrix{c}{57 \\ 73}. 
\end{align*}
To compute the second column of $\Mb$ we use 
\begin{align*}
  \Nb_{1} &= \Xb^{1} * \rbr{\one_{I} \odot \rbr{\cbb_{:,1}
      \otimes \one_{J}}^{\top}} \\
  &=\mymatrix{ccccccccc}{1 & 0 & 6 & 0 & 4 & 7 & 2 & 0 & 0 \\
    0 & 0 & 0 & 3 & 0 & 8 & 0 & 5 & 9} * 
  \mymatrix{ccccccccc}{2 & 2 & 2 & 1 & 1 & 1 & 3 & 3 & 3 \\
    2 & 2 & 2 & 1 & 1 & 1 & 3 & 3 & 3} \\
  &=\mymatrix{ccccccccc}{2 & 0 & 12 & 0 & 4 & 7 & 6 & 0 & 0 \\
    0 & 0 & 0 & 3 & 0 & 8 & 0 & 15 & 27}.
\end{align*}
\begin{align*}
  \Nb_{2} &= bin \rbr{\Xb^{1}} * \rbr{\one_{I} \odot \rbr{\one_{K}
      \otimes \bb_{:,1}}^{\top}} \\
  &=\mymatrix{ccccccccc}{1 & 0 & 1 & 0 & 1 & 1 & 1 & 0 & 0 \\
    0 & 0 & 0 & 1 & 0 & 1 & 0 & 1 & 1} *
  \mymatrix{ccccccccc}{1 & 1 & 3 & 1 & 1 & 3 & 1 & 1 & 3 \\
    1 & 1 & 3 & 1 & 1 & 3 & 1 & 1 & 3} \\
  &=\mymatrix{ccccccccc}{1 & 0 & 3 & 0 & 1 & 3 & 1 & 0 & 0 \\
    0 & 0 & 0 & 1 & 0 & 3 & 0 & 1 & 3}. 
\end{align*}
Finally we compute $\Nb_{3} = \Nb_{1} * \Nb_{2}$ via 
\begin{align*}
  \Nb_{3} &=\mymatrix{ccccccccc}{2 & 0 & 36 & 0 & 4 & 21 & 6 & 0 & 0 \\
    0 & 0 & 0 & 3 & 0 & 24 & 0 & 15 & 81} 
\end{align*}
and then compute 
\begin{align*}  
  \mb_{:,1} = \Nb_{3} \; \one_{JK} = \mymatrix{c}{69 \\ 123}.
\end{align*}
The algorithm uses $\max \rbr{J+\abr {\Omega^{\Xscrb}},
  K+\abr{\Omega^{\Xscrb}}}$ extra storage and $5 \abr{\Omega^{\Xscrb}}$
flops to compute one column of $\Mb$.

In contrast, our algorithm computes $\Mb$ as follows:
\begin{align*}
  \mb_{:,0} &= unvec_{\rbr{2,3}} \rbr{\mymatrix{ccc}{1 & 0 & 6 \\ 0 & 4 & 7 \\
      2 & 0 & 0 \\ \hline 0 & 0 & 0 \\ 3 & 0 & 8 \\ 0 & 5 & 9}
    \mymatrix{c}{3 \\ 1 \\ 2}}^{\top} \mymatrix{c}{1 \\ 2 \\ 1}
  = \mymatrix{ccc}{15 & 18 & 6 \\ 0 & 25 & 23} \mymatrix{c}{1 \\ 2 \\ 1} 
  = \mymatrix{c}{57 \\ 73} \\
  \mb_{:,1} &= unvec_{\rbr{2,3}} \rbr{\mymatrix{ccc}{1 & 0 & 6 \\ 0 & 4 & 7 \\
      2 & 0 & 0 \\ \hline 0 & 0 & 0 \\ 3 & 0 & 8 \\ 0 & 5 & 9}
    \mymatrix{c}{1 \\ 1 \\ 3}}^{\top} \mymatrix{c}{2 \\ 1 \\ 3}
  =\mymatrix{ccc}{19 & 25 & 2 \\ 0 & 27 & 32} \mymatrix{c}{2 \\ 1 \\ 3}
  = \mymatrix{c}{69 \\ 123}.
\end{align*}
Our algorithm only requires $nnzc(\Xb^{2})$ extra storage space and
$2\abr{\Omega^{\Xscrb}}$ flops for computing $\Mb$.

\newpage
\section{The ALS and GD algorithms of the DFacTo model}
\label{sec:dfacto-alsgd}

The ALS and GD algorithms of the DFacTo model 
(Section~\ref{sec:NewModel}) is summarized in 
Algorithms~\ref{algo:newals} and ~\ref{algo:newgd}.
We can solve the problem of overfitting by adding a $\lambda \Ib$ term in 
$\Cb^{\top} \Cb * \Bb^{\top} \Bb$, $\Ab^{\top} \Ab * \Cb^{\top} \Cb$, and 
$\Bb^{\top} \Bb * \Ab^{\top} \Ab$ of Algorithms~\ref{algo:newals} 
(lines~\ref{alg:A-als}, ~\ref{alg:B-als}, ~\ref{alg:C-als}) and ~\ref{algo:newgd} (lines~\ref{alg:A-gd}, ~\ref{alg:B-gd}, ~\ref{alg:C-gd}).

\begin{algorithm}
  \SetKw{KwInit}{Initialize:}
  \SetKw{KwInput}{Input:}
  
  \KwInput $\Xb^{1}$, $\Xb^{2}$, $\Xb^{3}$
  
  \KwInit $\Ab$, $\Bb$, $\Cb$ 
  
  \While{stopping criterion not met}{
    \While{r=1, 2,\ldots, R}{
      $\nbb_{:,r} \leftarrow unvec_{(K,I)} \rbr{\rbr{\Xb^{2}}^{\top}
        \bb_{:,r}}^{\top} \; \cbb_{:,r}$
    }

    $\Ab \leftarrow \Nb \rbr{\Cb^{\top} \Cb * \Bb^{\top} \Bb}^{-1}$
    \label{alg:A-als}

    Normalize columns of $\Ab$

    \While{r=1, 2,\ldots, R}{
      $\nbb_{:,r} \leftarrow unvec_{(I,J)} \rbr{\rbr{\Xb^{3}}^{\top}
        \cbb_{:,r}}^{\top} \; \ab_{:,r}$
    }

    $\Bb \leftarrow \Nb \rbr{\Ab^{\top} \Ab * \Cb^{\top} \Cb}^{-1}$
    \label{alg:B-als}

    Normalize columns of $\Bb$

    \While{r=1, 2,\ldots, Right}{
      $\nbb_{:,r} \leftarrow unvec_{(J,K)} \rbr{\rbr{\Xb^{1}}^{\top}
        \ab_{:,r}}^{\top} \; \bb_{:,r}$
    }

    $\Cb \leftarrow \Nb \rbr{\Bb^{\top} \Bb * \Ab^{\top} \Ab}^{-1}$
    \label{alg:C-als}

    Normalize columns of $\Cb$
  }
  \caption{DFacTo(ALS) algorithm for Tensor Factorization}
  \label{algo:newals}
\end{algorithm}

\begin{algorithm}
  \SetKw{KwInit}{Initialize:}
  \SetKw{KwInput}{Input:}
  
  \KwInput $\Xb^{1}$, $\Xb^{2}$, $\Xb^{3}$
  
  \KwInit $\Ab$, $\Bb$, $\Cb$ 
  
  \While{stopping criterion not met}{
    \While{r=1, 2,\ldots, R}{
      $\nbb_{:,r} \leftarrow unvec_{(K,I)} ((\Xb^2)^{\top}
        \bb_{:,r})^{\top} \; \cbb_{:,r}$
    }

    $\nabla \Ab \leftarrow \Nb + 
      \Ab \rbr{\Cb^{\top} \Cb * \Bb^{\top} \Bb}$
    \label{alg:A-gd}

    \While{r=1, 2,\ldots, R}{
      $\nbb_{:,r} \leftarrow unvec_{(I,J)} \rbr{\rbr{\Xb^{3}}^{\top}
        \cbb_{:,r}}^{\top} \; \ab_{:,r}$
    }

    $\nabla \Bb \leftarrow \Nb + 
      \Bb \rbr{\Ab^{\top} \Ab * \Cb^{\top} \Cb}$
      \label{alg:B-gd}

    \While{r=1, 2,\ldots, Right}{
      $\nbb_{:,r} \leftarrow unvec_{(J,K)} \rbr{\rbr{\Xb^{1}}^{\top}
        \ab_{:,r}}^{\top} \; \bb_{:,r}$
      \label{alg:C-gd}
    }

    $\nabla \Cb \leftarrow \Nb + 
      \Cb \rbr{\Bb^{\top} \Bb * \Ab^{\top} \Ab}$
      
    $\alpha \leftarrow Linesearch(\Ab, \Bb, \Cb, \nabla \Ab,
      \nabla \Bb, \nabla \Cb$)
    
    $\Ab \leftarrow \Ab - \alpha \nabla \Ab$ \\
    $\Bb \leftarrow \Bb - \alpha \nabla \Bb$ \\
    $\Cb \leftarrow \Cb - \alpha \nabla \Cb$ \\
  }
  \caption{DFacTo(GD) algorithm for Tensor Factorization}
  \label{algo:newgd}
\end{algorithm}

\section{Joint Matrix Completion and Tensor Factorization}
\label{sec:JointMatcomTenfac}
Generally, matrix completion is used when predicting how users will
rate items based on data of how these users have previously rated
other items. Occasionally, however, the accuracy of prediction from
matrix completion is poor because matrix completion only uses prior
information on the user, item, and rating. Thus, we suggest a joint
matrix completion and tensor factorization model. In this model, we
add a word count tensor $\Xscrb$ with user-item-word dimensions to
the previous rating matrix $\Yb$. This model is similar to
\cite{AcaKolDun11}; but instead of sharing just one dimension (item),
we introduce a model that shares both the user and item dimensions. 
Also, while \cite{AcaKolDun11} applies joint tensor completion and
matrix factorization, we suggest using joint matrix completion and
tensor factorization.

Our joint model can be computed by solving
\begin{align}
  \label{eq:joint-obj-fn}
  \min_{\Xscrbh, \Ybhat} & \sum_{(i,j) \in \Omega^{\Yb}}
    \frac{1}{2} \rbr{y_{i,j} - \hat{y}_{i,j}}^{2} + \mu \sum_{i,j,k}
    \frac{1}{2} \rbr{x_{i,j,k} - \hat{x}_{i,j,k}}^{2}  + \lambda \frac{1}{2}
    \rbr{\nbr{\Ab}^2 + \nbr{\Bb}^2 + \nbr{\Cb}^2} \\
  \text{ s.t. } & \Xscrbh = \sum_{r=1}^{R} \; \ab_{:,r} \circ \bb_{:,r} 
    \circ \cbb_{:,r}, \Ybhat = \sum_{r=1}^{R} \; \ab_{:,r} \circ \bb_{:,r}
    \nonumber
\end{align}
We can rewrite the equation in \eqref{eq:joint-obj-fn} as
\begin{align}
  \label{eq:joint-prob-A}
  f = \frac{1}{2} \sum_{j \in \Omega_{i,:}^{\Yb}} \rbr{y_{i,j} - \ab_{i,:}
    \bb_{j,:}^{\top}}^{2} + \frac{1}{2} \mu \sum_{j} \rbr{x^{1}_{i,j} - \ab_{i,:}
    \rbr{\Cb \odot \Bb}_{j,:}^{\top}}^2 + \frac{1}{2} \lambda \ab_{i,:}
    \ab_{i,:}^{\top}.
\end{align}
Next, the gradient of \eqref{eq:joint-prob-A} with respect to $\ab_{i,:}$
can be presented as
\begin{align}
  \label{eq:grad-joint-A}
  \pwrt{\ab_{i,:}}{f} = - \sbr{\yb_{i,:} \Bb + \mu \; \xb^{1}_{i,:} 
    \rbr{\Cb \odot \Bb}} + \ab_{i,:} \sbr{\sum_{j \in \Omega^{\Yb}}
    \bb_{j,:}^{\top} \bb_{j,:} + \mu \; \Cb^{\top} \Cb \ast \Bb^{\top} \Bb +
    \lambda \Ib}.
\end{align}
The two optimization methods we use to solve the minimization problem
in this paper are the Gradient Descent (GD) and the Alternative Least
Squares (ALS).  

In GD, the gradient of $f$ will be written as
\begin{align}
  \label{eq:grad-f}
  \nabla f= \mymatrix{c}{vec \rbr{\pwrt{\Ab}{f}} \\ 
    vec \rbr{\pwrt{\Bb}{f}} \\ vec \rbr{\pwrt{\Cb}{f}}}.
\end{align}
And each $vec(\cdot)$ of \eqref{eq:grad-f} will be computed by the
gradient of $f$ in \eqref{eq:grad-joint-A} that corresponds to 
$\ab_{j,:}$, $\bb_{j,:}$ and $\cbb_{k,:}$, respectively because
\begin{align*}
  vec \rbr{\pwrt{\Ab}{f}}=\mymatrix{c}{\rbr{\pwrt{\ab_{1,:}}{f}}^{\top} \\ 
    \rbr{\pwrt{\ab_{2,:}}{f}}^{\top} \\ \vdots \\ \rbr{\pwrt{\ab_{I,:}}{f}}^{\top}}.
\end{align*}
Then, we can compute the factor matrices $\Ab$, $\Bb$ and $\Cb$
with $\hat{f} = f - \alpha \; \nabla f$.

On the other hand, in ALS, setting \eqref{eq:grad-joint-A} to zero shows
that the optimal solution of \eqref{eq:joint-prob-A} is given by
\begin{align*}
  \abhat_{i,:} = \sbr{\yb_{i,:} \Bb + \mu \; \xb^{1}_{i,:} \rbr{\Cb \odot \Bb}}
    \sbr{\sum_{j \in \Omega^{\Yb}} \bb_{j,:}^{\top} \bb_{j,:} + \mu \;
    \Cb^{\top} \Cb \ast \Bb^{\top} \Bb + \lambda \Ib}^{-1}. 
\end{align*}

In both cases, we will use DFacTo, which we suggested in Section~\ref{sec:NewModel}, to avoid the intermediate data explosion problem
of $\Xb^{1} (\Cb \odot \Bb)$.

\subsection{Experimental Evaluation}
\label{sec:ExperEval}
We evaluate the joint tensor factorization and matrix completion model
on a subset of datasets from Table~\ref{tab:dataset}. Arguably, our
experimental evaluation is very preliminary, but promising. The
experimental setup is as follows: We split each dataset into train,
test, and validation. We randomly select 60\% of review, rating pairs
and designate them as training data. We then select 20\% of the
remaining review, rating pairs, discard the reviews, remove users or
items which do not occur in the training data, and use it for
validation. A similar procedure is used to generate the test dataset.
Cellartracker and RateBeer datasets contain ratings which are not in a 0
to 5 scale. For consistency, we normalize these ratings to be in 0 to 5.
Our evaluation metric is the mean square error which is given by
$\sum{(i,j) \in \Omega^{\Yb} (y_{i,j} - \hat{y}_{i,j})}$, were $y_{i,j}$
is a test rating and $\hat{y}_{i,j}$ is the rating predicted by our
model.

We train our model with $\mu \in \cbr{10^{2}, 10^{1}, ... , 10^{-9},
10^{-10}}$ and $\lambda \in \cbr{100, 10, 1, 0.1, 0.01}$, evaluate its
performance on the validation set, and pick the best model based on its
mean square error. We use this model to predict on the test dataset and
report average mean square error. In Tables~\ref{tab:mse-gd} and 
\ref{tab:mse-als}, we show the MSEs from both the matrix completion
and our joint model using GD and ALS. For GD, the method of
backtracking line search was used.

\begin{table}[h]
  \centering
  \begin{tabular}{c|r|r|r|r|r}
    \hline
    \multirow{2}{*}{Dataset} & \multicolumn{2}{c|}{Matrix Completion} & 
      \multicolumn{3}{c}{Joint (MC + TF)} \\
    \hhline{~-----}
    & \multicolumn{1}{c|}{$\lambda$} & Test MSE &
      \multicolumn{1}{c|}{$\mu$} & \multicolumn{1}{c|}{$\lambda$} &
        Test MSE \\
    \hline
    Yelp Phoenix & 10 & 3.133650 & $10^{-6}$ & 0.1 & 1.481320 \\
    Cellartracker & 1 & 1.506590 & $10^{-7}$ & 1 & 0.927066 \\
    Beeradvocate  & 1 & 0.603431 & $10^{-7}$ & 0.1 & 0.459174 \\
    Ratebeer & 0.01 & 0.390188 & $10^{-9}$ & 1 & 0.389653 \\
    \hline
  \end{tabular}
  \caption{Best Test MSE of single matrix completion and joint matrix
    completion and tensor factorization model after 500 iterations using
    Gradient Descent.}
  \label{tab:mse-gd}
\end{table}
\begin{table}[h]
  \centering
  \begin{tabular}{c|r|r|r|r|r}
    \hline
    \multirow{2}{*}{Dataset} & \multicolumn{2}{c|}{Matrix Completion} & 
      \multicolumn{3}{c}{Joint (MC + TF)} \\
    \hhline{~-----}
    & \multicolumn{1}{c|}{$\lambda$} & Test MSE &
      \multicolumn{1}{c|}{$\mu$} & \multicolumn{1}{c|}{$\lambda$} &
        Test MSE \\
    \hline
    Yelp Phoenix & 1 & 2.904320 & 1 & 1 & 1.944050 \\
    Cellartracker & 1 & 1.148010 & 100 & 0.01 & 0.363496 \\
    Beeradvocate & 0.1 & 0.465695 & 10 & 0.1 & 0.373827 \\
    Ratebeer & 0.1 & 0.355989 & 0.1 & 1 & 0.318692 \\
    \hline
  \end{tabular}
  \caption{Best Test MSE of single matrix completion and joint matrix
    completion and tensor factorization model after 500 iterations using
    ALS.}
  \label{tab:mse-als}
\end{table}

The results show that our joint model produces better MSEs than matrix
completion across all datasets and methods. All in all, our joint model
improves the accuracy of prediction when compared to matrix
completion.

\end{document}